\documentclass{article}

 \usepackage[preprint]{neurips_2026}

\usepackage[utf8]{inputenc} % allow utf-8 input
\usepackage[T1]{fontenc}    % use 8-bit T1 fonts
\usepackage{hyperref}       % hyperlinks
\usepackage{url}            % simple URL typesetting
\usepackage{booktabs}       % professional-quality tables
\usepackage{amsfonts}       % blackboard math symbols
\usepackage{nicefrac}       % compact symbols for 1/2, etc.
\usepackage{microtype}      % microtypography
\usepackage{xcolor}         % colors

\usepackage{amsmath}
\usepackage{amssymb}
\usepackage{multirow}
\usepackage{booktabs}
\usepackage{enumitem}

\usepackage{mathtools}
\usepackage{amsthm}
\usepackage{bbm}

\usepackage{bbm}
\usepackage{graphicx}
\usepackage{algorithm} 
\usepackage{algorithmic}
\usepackage{wrapfig}
\usepackage{subcaption}   % for \begin{subfigure}
\usepackage{caption}      % optional: finer caption control
\usepackage{float}        % optional: use [H] if you want strict placement
\usepackage{tikz-cd}
\usepackage{hyperref}
\usepackage{url}
\usepackage{todonotes}
\usepackage{algorithm,algorithmic}
\usepackage{tikz}
\usetikzlibrary{calc,arrows.meta,shapes}  % add calc (and any other libs you need)
\theoremstyle{plain}
\newtheorem{theorem}{Theorem}[section]
\newtheorem{proposition}[theorem]{Proposition}
\newtheorem{lemma}[theorem]{Lemma}

\theoremstyle{definition}
\newtheorem{definition}[theorem]{Definition}

\theoremstyle{remark}
\newtheorem{remark}[theorem]{Remark}

\newcommand{\argmin}{\operatorname*{arg\,min}}

\theoremstyle{definition}

\title{Generative Modeling on Metric Graphs\\ via Neural Optimal Transport}

\author{%
  Alessandro Micheli\thanks{These authors contributed equally to this work.} \\
  Imperial College London\\
  London, UK \\
  \texttt{a.micheli19@imperial.ac.uk} \\
  \And
  Yueqi Cao\footnotemark[1] \\
  KTH Royal Institute of Technology \\
  Stockholm, Sweden \\
  \texttt{yueqic@kth.se} \\
   \And
  Anthea Monod\thanks{These authors jointly supervised this work.}  \\
  Imperial College London\\
  London, UK \\
  \texttt{a.monod@imperial.ac.uk} \\
    \And
  Samir Bhatt\footnotemark[2] \\
  Imperial College London \\
  London, UK \\
  Statens Serum Institut \\
  Copenhagen, Denmark\\ 
  University of Copenhagen \\
  Copenhagen, Denmark\\
  \texttt{s.bhatt@imperial.ac.uk} \\
}

\begin{document}

\maketitle

\begin{abstract}
We introduce, to our knowledge, the first deep generative modeling framework for probability distributions continuously supported on compact metric graphs. Given source and target measures on a metric graph, our method embeds the graph into a smooth ambient space, solves an entropic Kantorovich problem via a neural semidual parameterization, and projects generated samples back onto the original graph. We study two embedded geometries: an extrinsic Euclidean realization and the intrinsic tropical Abel--Jacobi embedding into the Jacobian torus. In both cases, the resulting generator is graph-supported by construction. We prove that, in the joint limit of increasing neural expressivity, the learned generator converges weakly to a valid transport coupling between the original graph measures. Empirically, across a range of geometrically distinct graphs, our method matches or improves upon heuristic transport baselines based on discrete graph OT, while scaling more favorably. Finally, we demonstrate scalability on real-world urban mobility data by training our model on one million Uber pickup locations in Manhattan, New York City.

\end{abstract}

\section{Introduction}
Many scientific and geometric datasets are naturally supported on networks:
road systems, vascular structures, river networks, and
skeletonized shapes all give rise to data distributed over one-dimensional
geometric domains \citep{bachechi2022road,unsalan2012road,reichold2009vascular,dodds2000geometry,reinders2000skeleton}. A natural model for such domains is a \emph{metric graph},
where edges carry lengths and are treated as continuous intervals. This model
retains the geometry of positions along edges while encoding the connectivity
of the underlying network \citep{baker2011metric,burago2001course}. Generative modelling on metric graphs therefore
requires more than generating points in an ambient Euclidean space: samples
should remain supported on the graph and respect the metric geometry of the
domain.

Optimal transport (OT) provides a principled framework for comparing
probability measures and constructing transport mechanisms between them
\citep{Villani2016-ps,Ambrosio2024,ambrosio2012user}. Recent neural OT methods have made these ideas scalable for
high-dimensional continuous domains \citep{universal_neural_OT,rout2022generative,pmlr-v119-makkuva20a,Cuturi2013-ot-2,Altschuler2017-ot}, but they are not directly designed for
probability distributions supported on metric graphs. A common workaround is to
discretize the metric graph and solve OT on the resulting finite graph. While
this makes the problem computationally accessible, it replaces continuous
graph-supported distributions by mesh-dependent approximations and can scale
poorly as the discretization is refined \citep{essid2018quadratically,mcrae2018optimal,arioli2018finite}. These limitations motivate a
continuous, graph-supported neural OT framework that can exploit smooth ambient
structure without losing the geometry of the original metric graph.

We introduce, to our knowledge, the first deep generative modelling framework
for probability distributions continuously supported on compact metric graphs.
Given source and target measures
$\mathbb P_\Gamma,\mathbb Q_\Gamma\in\mathcal P(\Gamma)$, our method embeds
the graph into a smooth ambient space, solves an entropic Kantorovich problem
between the embedded measures using a neural semidual parameterization \citep{cuturi_peyre_semidual_review,pmlr-v202-vacher23a,micheli2026riemannianneuraloptimaltransport}, and
then maps generated samples back to the original graph. The method follows the
pipeline
\[
\Gamma
\xrightarrow{\ \Psi\ }
\mathcal M_\Psi
\xrightarrow{\ \mathrm{Entropic\ Neural\ OT}\ }
\mathcal M_\Psi
\xrightarrow{\ \rho_\Psi^\Gamma\ }
\Gamma.
\]
Thus the learning problem is carried out in a smooth ambient geometry, while
the final generator is supported on the original metric graph by construction.

Our formulation is plan-based rather than map-based. After embedding, the
pushforward measures
$\Psi_\#\mathbb P_\Gamma$ and $\Psi_\#\mathbb Q_\Gamma$ remain supported on the
embedded graph and are typically singular with respect to ambient volume. We do
not mollify these input measures into ambient densities, nor do we rely on
Brenier-type regularity. Instead, we solve entropic OT directly between the
embedded graph-supported measures. The learned Schr\"odinger potential induces
a Gibbs conditional law over target samples. To obtain continuous generative
outputs rather than samples restricted to target atoms, we smooth this
conditional law using the ambient heat kernel. Since heat-smoothed samples need
not lie on the embedded graph, we recover graph-supported samples by nearest
projection onto the embedded graph followed by the inverse embedding.

The choice of embedding determines the geometry in which transport is learned.
We study two complementary choices. The first is an extrinsic Euclidean
realization $\iota:\Gamma\to\mathbb R^m$, which is natural when the graph is
observed with coordinates, as in road networks or embedded skeletons. The
second is the tropical Abel--Jacobi embedding
$\Phi_p:\Gamma\to\mathfrak J(\Gamma)$ into the tropical Jacobian torus
\citep{Mikhalkin2006-ah,baker2011metric,yueqi_computing}. This embedding depends only on the edge lengths and cycle structure
of the metric graph, and therefore provides an intrinsic alternative to a
particular Euclidean drawing. These two choices allow us to compare an
extrinsic ambient geometry with an intrinsic tropical geometry within the same
neural OT framework.

We prove that, in the joint limit of increasing neural expressivity and
vanishing heat scale, the heat-smoothed projection--pullback generator
converges weakly to a valid transport coupling in
$\Pi(\mathbb P_\Gamma,\mathbb Q_\Gamma)$. Consequently, although the method
learns and smooths conditionals in an ambient space, the limiting object is a bona fide coupling between the original graph-supported measures.

Empirically, we evaluate our method on a range of geometrically distinct metric
graphs. Our approach improves upon heuristic transport baselines
based on discrete graph OT, while scaling
comparably than these baselines. We further demonstrate scalability on
real-world urban mobility data by training our model on one million Uber pickup
locations in Manhattan, New York City.

\textbf{Our contributions are as follows.}
We introduce, to our knowledge, the first deep generative modelling framework
for probability distributions continuously supported on compact metric graphs.
Our approach embeds the graph into a smooth ambient space, solves a plan-based
entropic neural OT problem between the embedded graph-supported measures, and
recovers graph-supported samples through a projection--pullback
generator. We develop the framework for two ambient transport geometries: an
extrinsic Euclidean realization and the intrinsic tropical Abel--Jacobi
embedding into the tropical Jacobian torus. We prove that, as neural
expressivity increases, the resulting generator
converges weakly to a valid coupling between the original graph measures.
Empirically, we show that the method improves upon heuristic
baselines based on discrete graph OT while scaling more favorably, and we
demonstrate its scalability to real-wrold data by training it on one million Uber pickup locations in
Manhattan, New York City.

\section{Background}
\label{sec-background}
Our construction combines geometry from metric graphs with neural entropic
optimal transport. We first recall the metric graph formalism and introduce the
two embedding geometries used throughout the paper: an extrinsic Euclidean
realization and the intrinsic tropical Abel--Jacobi embedding into the tropical
Jacobian torus. We then review the semidual formulation of entropic OT, its
neural parameterization, and the heat-smoothed conditional sampling procedure
used by our generator.

\paragraph{Metric Graphs and Their Embeddings.}
A metric graph is a one-dimensional geodesic metric space obtained by assigning
a positive length to each edge of a finite connected graph. Concretely, let
$G=(V,E)$ be a finite connected graph and let
$\ell:E\to\mathbb R_{>0}$ be a length function. The associated metric graph
$\Gamma$ is constructed by gluing a closed interval $[0,\ell(e)]$ for each
$e\in E$ according to the incidence relations of $G$. The distance
$d_\Gamma(x,y)$ between points $x,y\in\Gamma$ is defined as the length of the
shortest continuous path in $\Gamma$ connecting them. We refer to $(G,\ell)$ as
a model of $\Gamma$. An edge is called a bridge if its removal disconnects the
underlying graph. Throughout this paper we restrict to compact metric graphs.

For any metric graph $\Gamma$, we represent its points through an
embedding into a smooth ambient space
\[
    \Psi:\Gamma\longrightarrow \mathcal M_\Psi.
\]
The first natural choice is the Euclidean realization
\[
    \iota:\Gamma\longrightarrow\mathbb R^m,
\]
for example the planar coordinates of a road network \citep{unsalan2012road}. This embedding is efficient for computation, however, the
extrinsic distances in \(\mathbb R^m\) can be small even when the
graph distance is large.

The second embedding is the \emph{tropical Abel--Jacobi embedding}
\citep{yueqi_computing,baker2011metric}, an intrinsic embedding determined by
the edge lengths and cycle structure of the metric graph. It maps $\Gamma$ into
its tropical Jacobian, a flat torus, and is the metric-graph analogue of the
classical Abel--Jacobi map in complex algebraic geometry. We recall only the
ingredients needed for our construction; further details are given in
Appendix~\ref{review-metric-graphs-tropical}.
Computing this distance requires minimizing over the lattice $\Lambda$, i.e.,
solving a closest-vector problem (CVP) in the Jacobian lattice. Exact CVP is
exponential in the lattice dimension in the worst case, so this distance can
become computationally expensive for high-genus graphs; see~\cite[Section 5]{yueqi_computing} for further details.

Fix a basepoint $p\in\Gamma$ and denote by $\Phi_p$ the corresponding tropical
Abel--Jacobi map. On each edge $e\cong[0,\ell(e)]$, the map takes the form
\[
    \Phi_p(t)=[\alpha_e+\beta_e\, t],
\]
for vectors $\alpha_e,\beta_e\in\mathbb R^g$, so the image
$\Phi_p(\Gamma)\subset\mathbb T_\Lambda$ is a finite union of wrapped line segments.

We use the following notation for either embedding geometry:
\[
(\mathcal M_\Psi,d_\Psi,\Psi)
=
\begin{cases}
(\mathbb R^m,\|\cdot-\cdot\|_2,\iota),
& \text{Euclidean embedding},\\[0.3em]
(\mathbb T_\Lambda,d_{\mathbb T_\Lambda,\Xi},\Phi_p),
& \text{tropical Abel--Jacobi embedding}.
\end{cases}
\]
In both cases, the embedded graph $\Psi(\Gamma)$ is the support on which the
input measures live, while $\mathcal M_\Psi$ provides the smooth ambient
geometry used for optimal transport and sampling.
\paragraph{Entropic Riemannian Optimal Transport and Neural Sampling.}

We briefly recall the entropic Riemannian OT framework of
\cite{micheli_entropic}, specialized to the notation used in this paper. Let
$(\mathcal M,\mathfrak g)$ be a complete $p$-dimensional Riemannian manifold
with geodesic distance $d$, and let $\mathcal P(\mathcal M)$ denote the set of
Borel probability measures on $\mathcal M$. We use the quadratic geodesic cost
\[
    c(x,y)=\frac12 d(x,y)^2 .
\]
For $\varepsilon>0$ and
$\mu,\nu\in\mathcal P(\mathcal M)$ with $c\in L^1(\mu\otimes\nu)$, the
entropically regularized OT problem is
\begin{equation}
\label{eq:entropic-ot}
\mathrm{OT}_\varepsilon(\mu,\nu)
=
\inf_{\pi\in\Pi(\mu,\nu)}
\left\{
\int_{\mathcal M\times\mathcal M} c(x,y)\,d\pi(x,y)
+
\varepsilon\,\mathrm{KL}(\pi\mid\mu\otimes\nu)
\right\}.
\end{equation}
Unlike map-valued Monge formulations, the entropic problem naturally returns a
coupling, or equivalently, after disintegration with respect to $\mu$, a
conditional transport law.

The problem \eqref{eq:entropic-ot} admits an equivalent semidual formulation:
the optimization can be reduced to a single target-side dual potential
$g:\mathcal M\to\mathbb R$. Given such a potential, the source-side dual
potential is determined by the soft $c$-transform below. At optimality, this
pair of dual potentials is called the pair of Schr\"odinger potentials. For
$g:\mathcal M\to\mathbb R$, define
\begin{equation}
\label{eq:soft-c-transform}
f_g^\varepsilon(x)
=
-\varepsilon
\log
\int_{\mathcal M}
\exp\!\left(
\frac{g(y)-c(x,y)}{\varepsilon}
\right)
\,d\nu(y).
\end{equation}
The semidual objective is
\begin{equation}
\label{eq:semidual}
\mathcal J_\varepsilon(g)
=
\int_{\mathcal M} g\,d\nu
+
\int_{\mathcal M} f_g^\varepsilon\,d\mu ,
\end{equation}
and, with the supremum taken over admissible potentials,
\begin{equation}
\label{eq:entropic-semidual-equality}
\mathrm{OT}_\varepsilon(\mu,\nu)
=
\sup_g \mathcal J_\varepsilon(g).
\end{equation}

It was shown in \cite{micheli_entropic} that, under suitable expressivity
assumptions, the optimizer of the semidual problem
\eqref{eq:entropic-semidual-equality}, equivalently the optimal target-side
Schr\"odinger potential, can be approximated by restricting the admissible
potentials to a neural function class. Since Schr\"odinger potentials are
defined only up to additive constants, we fix this gauge freedom by centering
the parameterized potential. Given a feature map
$\varphi:\mathcal M\to\mathbb R^n$ and a neural network
$a_\theta:\mathbb R^n\to\mathbb R$, set
\[
h_\theta:=a_\theta\circ\varphi,
\qquad
g_\theta
=
h_\theta-\int h_\theta\,d\nu .
\]
Then $\int g_\theta\,d\nu=0$. The corresponding source-side potential is
\(
    f_\theta^\varepsilon := f_{g_\theta}^\varepsilon,
\)
where $f_{g_\theta}^\varepsilon$ is the soft $c$-transform in
\eqref{eq:soft-c-transform}. The model is trained by maximizing the restricted
semidual objective
\[
    \max_\theta \mathcal J_\varepsilon(g_\theta).
\]

For a trained parameter $\theta$, the learned conditional law is
\begin{equation}
\label{eq:conditional-law}
\pi_{\theta,x}^\varepsilon(dy)
=
\exp\!\left(
\frac{
f_\theta^\varepsilon(x)+g_\theta(y)-c(x,y)
}{\varepsilon}
\right)
\,d\nu(y),
\end{equation}
which is normalized by the definition of $f_\theta^\varepsilon$.Together with the source measure, this gives the learned joint law
\[
    \pi_\theta^\varepsilon(dx,dy)
    =
    \mu(dx)\pi_{\theta,x}^\varepsilon(dy).
\]
Under the expressivity assumptions of \cite{micheli_entropic}, these learned
joint laws approximate the optimal entropic OT plan as the neural function
class becomes increasingly expressive. If
$\nu=\sum_{j=1}^M b_j\delta_{y_j}$ is empirical, then direct sampling from
$\pi_{\theta,x}^\varepsilon$ returns one of the observed target atoms.

To obtain continuous output laws, we smooth these conditionals by the heat
semigroup. Let $p_t$ be the heat kernel on $\mathcal M$ and let $\mathrm{vol}$
be the Riemannian volume measure. For $t>0$, define
\begin{equation}
\label{eq:heat-smoothed-law}
Q_{\theta,t}^\varepsilon(x,dz)
=
\left(
\int_{\mathcal M}
p_t(y,z)\,
\pi_{\theta,x}^\varepsilon(dy)
\right)
\mathrm{vol}(dz).
\end{equation}
Then $Q_{\theta,t}^\varepsilon(x,\cdot)$ is absolutely continuous with respect
to ambient volume and is not restricted to sampled target atoms. In the
empirical case, it is a heat-kernel mixture centered at the atoms
$\{y_j\}_{j=1}^M$, with mixture weights given by the Gibbs conditional.

\section{Graph-Supported Generative Modeling via Neural Optimal Transport}
\label{sec-model}
Let $\Gamma$ be a compact metric graph, and let
$\mathbb P_\Gamma,\mathbb Q_\Gamma\in\mathcal P(\Gamma)$ be source and target
measures, where $\mathcal P(\Gamma)$ denotes the set of Borel probability
measures supported on $\Gamma$. Our goal is to learn a graph-supported
generative mechanism from $\mathbb P_\Gamma$ to $\mathbb Q_\Gamma$. The method
follows the pipeline
\[
\Gamma
\xrightarrow{\ \Psi\ }
\mathcal M_\Psi
\xrightarrow{\ \mathrm{entropic\ neural\ OT}\ }
\mathcal M_\Psi
\xrightarrow{\ \rho_\Psi^\Gamma\ }
\Gamma.
\]
We now describe the three components of this construction; see
Figure~\ref{fig:algorithm} for an illustration of the full pipeline.

\begin{figure}[t!]
    \centering
    \includegraphics[width=\linewidth]{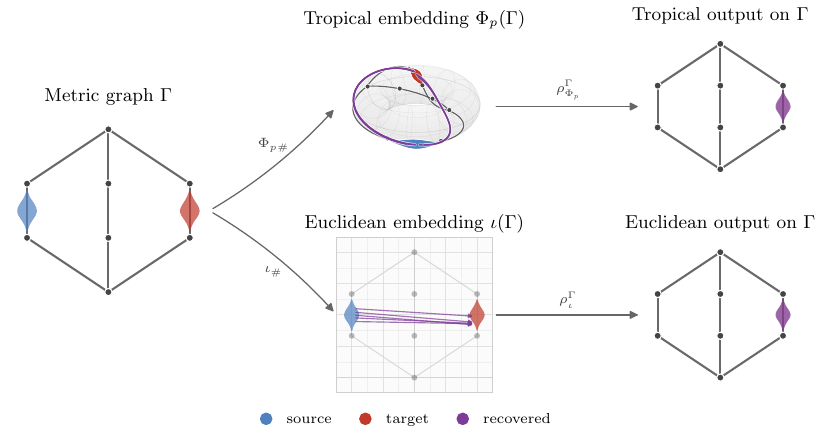}
    \caption{Blue and red denote the source and target measures
$\mathbb P_\Gamma$ and $\mathbb Q_\Gamma$ on the metric graph $\Gamma$.
We compare the two embeddings used in this work: the tropical
Abel--Jacobi map $\Phi_p:\Gamma\to\mathfrak J(\Gamma)$ and the
Euclidean realization $\iota:\Gamma\to\mathbb R^2$.
In each embedded space, a neural entropic OT model generates heat-smoothed
samples, which are mapped back to the graph by the recovery map
$\rho_\Psi^\Gamma$.
Purple denotes the recovered graph-supported output.}
    \label{fig:algorithm}
\end{figure}

\paragraph{Embedding the Graph.}
Let $\Psi:\Gamma\to\mathcal M_\Psi$ be one of the embeddings introduced in
Section~\ref{sec-background}. We use the embedded quadratic cost
\[
c_\Psi(u,v)=\frac12 d_\Psi(u,v)^2,
\qquad u,v\in\mathcal M_\Psi.
\]
For graphs with bridges, the tropical embedding is applied after the bridge
augmentation described in Appendix~\ref{app-bridges}, so that the graph can be
recovered from its embedded image.

For either choice of $\Psi$, we push forward the measures
\begin{equation}
\label{eq:embedded-measures}
\mathbb P_\Psi:=\Psi_\#\mathbb P_\Gamma,
\qquad
\mathbb Q_\Psi:=\Psi_\#\mathbb Q_\Gamma .
\end{equation}
These measures are supported on the embedded graph $\Psi(\Gamma)$ and are
typically singular with respect to ambient volume.

\paragraph{Learning the Embedded Transport Law.}
We apply the neural entropic OT construction of Section~\ref{sec-background}
with
\[
(\mathcal M,\mu,\nu,c)
=
(\mathcal M_\Psi,\mathbb P_\Psi,\mathbb Q_\Psi,c_\Psi).
\]
The corresponding embedded entropic problem is
\begin{equation}
\label{eq:embedded-entropic-ot}
\mathrm{OT}_{\varepsilon}^{\Psi}
=
\inf_{\pi\in\Pi(\mathbb P_\Psi,\mathbb Q_\Psi)}
\left\{
\int c_\Psi(u,v)\,d\pi(u,v)
+
\varepsilon
\mathrm{KL}\!\left(\pi\,\middle|\,\mathbb P_\Psi\otimes\mathbb Q_\Psi\right)
\right\}.
\end{equation}
We parameterize the target-side Schr\"odinger potential by a neural network
$g_\theta:\mathcal M_\Psi\to\mathbb R$. Let
$f_{\theta,\Psi}^{\varepsilon}$ denote the corresponding embedded soft
$c_\Psi$-transform. After training, the learned embedded conditional law is
\[
\pi_{\theta,\Psi,u}^{\varepsilon}(dv)
=
\exp\!\left(
\frac{
f_{\theta,\Psi}^{\varepsilon}(u)+g_\theta(v)-c_\Psi(u,v)
}{\varepsilon}
\right)
\,d\mathbb Q_\Psi(v).
\]

We then smooth this embedded conditional using the heat kernel of
$\mathcal M_\Psi$. Applying \eqref{eq:heat-smoothed-law} with heat kernel
$p_t^\Psi$ gives
\begin{equation}
\label{eq:heat-smoothed-embedded-conditional}
Q_{\theta,\Psi,t}^\varepsilon(u,dz)
=
\left(
\int_{\mathcal M_\Psi}
p_t^\Psi(v,z)
\pi_{\theta,\Psi,u}^{\varepsilon}(dv)
\right)
\mathrm{vol}_\Psi(dz),
\qquad t>0 .
\end{equation}
Samples from $Q_{\theta,\Psi,t}^\varepsilon(u,\cdot)$ are continuous ambient
samples in $\mathcal M_\Psi$. For $t>0$, they need not lie on the embedded
graph $\Psi(\Gamma)$, which motivates the projection--pullback step below.

\paragraph{Recovering Graph-Supported Samples.}
The inverse $\Psi^{-1}$ is defined only on the embedded graph
$\Psi(\Gamma)$. Since heat-smoothed samples live in the ambient space
$\mathcal M_\Psi$, we first project them onto $\Psi(\Gamma)$ and then apply the
inverse embedding. Define the recovery map
\begin{equation}
\label{eq:projection-pullback}
\rho_\Psi^\Gamma
:=
\Psi^{-1}\circ \operatorname{proj}_{\Psi(\Gamma)}
:
\mathcal M_\Psi\to\Gamma,
\end{equation}
where
\[
\operatorname{proj}_{\Psi(\Gamma)}(z)
\in
\argmin_{s\in\Psi(\Gamma)} d_\Psi(z,s)
\]
is the nearest-point projection, with a fixed measurable tie-breaking rule when
the minimizer is not unique. Details of the projection algorithm are given in
Appendix~\ref{app:graph-embedding-projection-inversion}.

The final generator is obtained by pushing the heat-smoothed embedded
conditional through $\rho_\Psi^\Gamma$. Equivalently, for
$x\sim\mathbb P_\Gamma$, we sample
\[
z\sim Q_{\theta,\Psi,t}^{\varepsilon}(\Psi(x),\cdot),
\qquad
y=\rho_\Psi^\Gamma(z).
\]
The induced graph-supported transport law is
\begin{equation}
\label{eq:graph-heat-coupling}
\pi_{\theta,\Gamma,t}^{\varepsilon,\Psi}(dx,dy)
=
\mathbb P_\Gamma(dx)\,
\bigl(\rho_\Psi^\Gamma\bigr)_\#
Q_{\theta,\Psi,t}^{\varepsilon}
\bigl(\Psi(x),\cdot\bigr)(dy),
\end{equation}
which is supported on $\Gamma\times\Gamma$ by construction.

\section{Theoretical Guarantees}
\label{sec:theoretical-guarantees}
We now state the recovery guarantee for the graph-supported generator. The
result shows that, in the joint limit of increasing neural expressivity and
vanishing heat scale, the learned heat-smoothed projection--pullback generator
recovers a valid transport coupling from $\mathbb P_\Gamma$ to
$\mathbb Q_\Gamma$. The
proof is given in Appendix~\ref{app-proof-graph-supported-recovery}.

Fix $\varepsilon>0$ and one of the embeddings
$\Psi:\Gamma\to\mathcal M_\Psi$ from Section~\ref{sec-background}. In both embedding geometries, $\Psi$ is injective onto its image: this is assumed
for the Euclidean realization and ensured for the tropical Abel--Jacobi
embedding by the bridge augmentation in Appendix~\ref{app-bridges}. Hence
$\Psi^{-1}:\Psi(\Gamma)\to\Gamma$ is well-defined. Let
$\pi_{\varepsilon,\Psi}^\star$ be the optimizer of the embedded entropic OT
problem \eqref{eq:embedded-entropic-ot}. Since
$\mathbb P_\Psi=\Psi_\#\mathbb P_\Gamma$ and
$\mathbb Q_\Psi=\Psi_\#\mathbb Q_\Gamma$, the pullback
\begin{equation}
\label{eq:graph-limit-coupling}
\pi_{\varepsilon,\Gamma}^{\star,\Psi}
:=
(\Psi^{-1},\Psi^{-1})_\#\pi_{\varepsilon,\Psi}^\star
\end{equation}
belongs to $\Pi(\mathbb P_\Gamma,\mathbb Q_\Gamma)$.

We now specify the neural potential class. Let
\(
K_{\mathbb Q_\Psi}:=\operatorname{spt}(\mathbb Q_\Psi)\subset\Psi(\Gamma).
\)
Following \cite{micheli_entropic}, we take target-side potentials from a
centered pullback class. Let
$\varphi_\Psi:K_{\mathbb Q_\Psi}\to\mathbb R^n$ be a continuous injective
feature map and let $\mathcal F\subset C(\mathbb R^n,\mathbb R)$ be a function
class. Define
\[
\mathsf C_{\mathbb Q_\Psi}(\varphi_\Psi^*\mathcal F)
:=
\left\{
a\circ\varphi_\Psi
-
\int a\circ\varphi_\Psi\,d\mathbb Q_\Psi
:
a\in\mathcal F
\right\}.
\]
When $\mathcal F$ is dense in $C(\mathbb R^n,\mathbb R)$ under the ucc topology,
the recovery result of \cite{micheli_entropic} gives sequences of neural
potentials in
$\mathsf C_{\mathbb Q_\Psi}(\varphi_\Psi^*\mathcal F)$ whose induced embedded
Gibbs laws recover the embedded entropic coupling. For the two ambient geometries considered in this paper, the heat-kernel
assumption is automatic: $\mathbb R^m$ and the flat torus
$\mathbb T_\Lambda$ are complete and stochastically complete, and their heat
kernels form approximate identities as $t\downarrow0$.

Let $\pi_{m,\Psi}^{\varepsilon}$ denote the embedded Gibbs law induced by a
potential
$g_m\in\mathsf C_{\mathbb Q_\Psi}(\varphi_\Psi^*\mathcal F)$, and let
$\pi_{m,\Gamma,t}^{\varepsilon,\Psi}$ denote the graph-supported
heat-smoothed transport law generated by the procedure in
Section~\ref{sec-model}. Under these assumptions, the recovery result of \cite{micheli_entropic}
ensures the existence of neural potentials in
$\mathsf C_{\mathbb Q_\Psi}(\varphi_\Psi^*\mathcal F)$ whose induced embedded
Gibbs laws converge in total variation to $\pi_{\varepsilon,\Psi}^\star$.

\begin{theorem}[Graph-supported recovery]
\label{thm:graph-supported-recovery}
Assume that $\mathcal F$ is dense in $C(\mathbb R^n,\mathbb R)$ under the ucc
topology and that
$\varphi_\Psi:K_{\mathbb Q_\Psi}\to\mathbb R^n$ is continuous and injective.
Then there exists a sequence of neural potentials
\[
g_m\in\mathsf C_{\mathbb Q_\Psi}(\varphi_\Psi^*\mathcal F)
\]
whose induced embedded Gibbs laws satisfy
\[
\|\pi_{m,\Psi}^{\varepsilon}
-
\pi_{\varepsilon,\Psi}^\star\|_{\mathrm{TV}}
\longrightarrow 0 .
\]
Moreover, there exist sequences $m_k\to\infty$ and $t_k\downarrow0$ such that
\begin{equation}
\label{eq:graph-supported-recovery}
\pi_{m_k,\Gamma,t_k}^{\varepsilon,\Psi}
\rightharpoonup
\pi_{\varepsilon,\Gamma}^{\star,\Psi}
\qquad
(k\to\infty).
\end{equation}
In particular, the learned heat-smoothed generator, followed by the recovery
map $\rho_\Psi^\Gamma$, recovers in the joint limit a valid transport coupling
from $\mathbb P_\Gamma$ to $\mathbb Q_\Gamma$.
\end{theorem}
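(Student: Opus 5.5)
The first claim is the recovery result of \cite{micheli_entropic}, applied on the compact set $K_{\mathbb Q_\Psi}$ with the embedded cost $c_\Psi$. Since $\Gamma$ is compact and $\Psi$ is continuous, $\Psi(\Gamma)$ is compact, so $c_\Psi$ is bounded and continuous on $\Psi(\Gamma)\times\Psi(\Gamma)$. Their hypotheses (ucc-density of $\mathcal F$, continuous injective $\varphi_\Psi$) are exactly those assumed. I would therefore invoke that result directly to obtain $g_m\in\mathsf C_{\mathbb Q_\Psi}(\varphi_\Psi^*\mathcal F)$ with $\|\pi^\varepsilon_{m,\Psi}-\pi^\star_{\varepsilon,\Psi}\|_{\mathrm{TV}}\to0$. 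The rest of the proof is a two-parameter approximation followed by a diagonal argument.

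Let $\bar\pi_{m,t}(du,dz):=\mathbb P_\Psi(du)\,Q^\varepsilon_{m,\Psi,t}(u,dz)$, which equals $(\mathrm{id}\otimes P_t)\pi^\varepsilon_{m,\Psi}$, where $P_t$ is the heat semigroup acting on the second marginal. Define the map $R:=(\Psi^{-1},\rho_\Psi^\Gamma)$ on $\Psi(\Gamma)\times\mathcal M_\Psi$. Then $\pi^{\varepsilon,\Psi}_{m,\Gamma,t}=R_\#\bar\pi_{m,t}$. On $\Psi(\Gamma)\times\Psi(\Gamma)$, the projection is the identity, so $R=(\Psi^{-1},\Psi^{-1})$ there and $R_\#\pi^\star_{\varepsilon,\Psi}=\pi^{\star,\Psi}_{\varepsilon,\Gamma}$.

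Fix a bounded Lipschitz test function $\phi$ on $\Gamma\times\Gamma$. I split
\[
\Bigl|\int\phi\,d\pi^{\varepsilon,\Psi}_{m,\Gamma,t}-\int\phi\,d\pi^{\star,\Psi}_{\varepsilon,\Gamma}\Bigr|
\le \|\phi\|_\infty\|\pi^\varepsilon_{m,\Psi}-\pi^\star_{\varepsilon,\Psi}\|_{\mathrm{TV}}
+\Bigl|\int\phi\circ R\,d(\mathrm{id}\otimes P_t)\pi^\star_{\varepsilon,\Psi}-\int\phi\circ R\,d\pi^\star_{\varepsilon,\Psi}\Bigr|.
\]
The first term uses that $\mathrm{id}\otimes P_t$ is Markov and hence contracts total variation. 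It tends to $0$ in $m$ uniformly in $t$. For the second term, $(\mathrm{id}\otimes P_t)\pi^\star\rightharpoonup\pi^\star$ as $t\downarrow0$, because the heat kernels on $\mathbb R^m$ and $\mathbb T_\Lambda$ are approximate identities and stochastically complete. For any continuous bounded $h$, $\int h(u,z)p_t(v,z)\,dz\to h(u,v)$ boundedly, and dominated convergence gives the limit.

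The main obstacle is that $\phi\circ R$ need not be continuous, because $\operatorname{proj}_{\Psi(\Gamma)}$ may be discontinuous where the nearest point is not unique. I would handle this with the continuous mapping theorem. The set-valued nearest-point map onto the compact set $\Psi(\Gamma)$ is upper semicontinuous, and at every point $z\in\Psi(\Gamma)$ the minimizer is unique (namely $z$ itself). Hence $\operatorname{proj}_{\Psi(\Gamma)}$ is continuous at each point of $\Psi(\Gamma)$: if $z_n\to z\in\Psi(\Gamma)$ and $s_n$ are any chosen minimizers, then $d_\Psi(s_n,z)\le 2d_\Psi(z_n,z)\to0$. The map $\Psi^{-1}$ is continuous on $\Psi(\Gamma)$, since it is the inverse of a continuous injection from a compact space. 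Therefore $R$ is continuous at every point of $\Psi(\Gamma)\times\Psi(\Gamma)$, which carries full $\pi^\star_{\varepsilon,\Psi}$-mass. The continuous mapping theorem then gives $R_\#(\mathrm{id}\otimes P_t)\pi^\star\rightharpoonup R_\#\pi^\star$. On the torus, where the ambient space is compact, this needs no tightness argument; on $\mathbb R^m$, tightness of the heat-smoothed laws follows from the Gaussian tails.

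To finish, choose $t_k\downarrow0$ so that the second term is at most $1/k$ for each function in a countable convergence-determining family of bounded Lipschitz functions (say the first $k$ of them). Then choose $m_k\to\infty$ with $\|\pi^\varepsilon_{m_k,\Psi}-\pi^\star\|_{\mathrm{TV}}\le 1/k$. Because the first bound is uniform in $t$, the two choices do not interfere. This yields weak convergence $\pi^{\varepsilon,\Psi}_{m_k,\Gamma,t_k}\rightharpoonup\pi^{\star,\Psi}_{\varepsilon,\Gamma}$ on the compact space $\Gamma\times\Gamma$. The limit lies in $\Pi(\mathbb P_\Gamma,\mathbb Q_\Gamma)$ by the pullback remark preceding the theorem.
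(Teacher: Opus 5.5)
Your proposal is correct and uses essentially the same argument as the paper: TV-contractivity of heat smoothing makes the neural error uniform in $t$, the heat kernel acts as an approximate identity on the smoothed optimizer, $\rho_\Psi^\Gamma$ is continuous at points of $\Psi(\Gamma)$ via $d_\Psi(s_n,v)\le 2d_\Psi(z_n,v)$, and the continuous mapping theorem finishes the argument. The only difference is cosmetic: the paper first shows $\Pi^\varepsilon_{m,\Psi,t_m}\rightharpoonup\pi^\star_{\varepsilon,\Psi}$ in the ambient space for an arbitrary $t_m\downarrow0$ and applies the mapping theorem once, so your diagonal choice of $(t_k,m_k)$ is valid but unnecessary, since $m_k=k$ with any $t_k\downarrow0$ already works.
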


\section{Experiments}
We evaluate our graph-supported neural OT framework in three complementary
settings. First, we test transport fidelity on controlled intrinsic-geometry
benchmarks, comparing tropical Abel--Jacobi and ambient Euclidean embeddings
against heuristic baselines based on discrete graph OT. Second, we study
computational scaling in time, memory, and inference throughput as graph
resolution and support size increase. Third, we demonstrate scalability on real graph-supported urban mobility data, where observations lie continuously along edge interiors rather than only at vertices. Across experiments, the neural potential is parameterized using feature maps
$\varphi_\Psi$ adapted to the chosen ambient geometry: for the tropical
Abel--Jacobi geometry we use a Gromov feature embedding on the Jacobian torus,
while for the Euclidean geometry we consider both Riemannian log features and
Gromov features. Full implementation details, dataset construction, and
additional experimental results are deferred to
Appendix~\ref{app:implementation-details}. The code will be released publicly
and is currently \textbf{submitted as supplementary material for double-blind
review}.

\subsection{Metric Graph Geometry Benchmarks}
\label{sec:q1}

We first evaluate the proposed neural OT framework on controlled synthetic
benchmarks across four metric graph geometries: a theta graph, a wheel graph, a
rectangular grid, and a small planar road graph. Source and target distributions
are continuous graph-supported measures, typically truncated Gaussian densities
on edge interiors, with the target mode relocated to a geometrically distinct
part of the graph. These benchmarks test transport across homologically distinct
branches, cyclic and radial structure, multiple shortest-path routes, and
irregular road network geometry. We compare against two source-aware
pushforward baselines: a node-interpolation baseline, which solves vertex-level
graph OT and extends the resulting map to edge interiors, and an ambient
pushforward baseline, which computes Euclidean barycentric images before
projecting outputs back to the graph. We evaluate two neural variants: a
tropical Abel--Jacobi model using the Jacobian representation with the tropical
metric, and an ambient Euclidean model using the same training and extraction
pipeline in the Euclidean embedding. The reference target distribution is a
deterministic quadrature approximation of the analytical graph-supported target
measure; full construction details are given in
Appendix~\ref{app:q1-implementation}.

We report intrinsic graph Wasserstein errors $W_1^\Gamma$ and $W_2^\Gamma$
against the deterministic reference measure, together with edgewise density and
CDF $L^1$ errors. Figure~\ref{fig:q1-geometry} reports each neural method's
error relative to the stronger of the two heuristic baselines for each
benchmark and metric; values below one therefore indicate improvement over both
heuristics. Across all metric-graph benchmarks, both neural OT variants
consistently reduce error relative to the best heuristic baseline, with most
improvements falling outside the $\pm 5\%$ effective-tie band. The tropical
Abel--Jacobi and ambient Euclidean neural variants perform comparably overall,
suggesting that the main gain in this experiment comes from learning an
amortized entropic transport rule rather than from a uniform advantage of one
embedding geometry over the other. Full absolute accuracy results for all methods are provided in
Appendix~\ref{app:q1-implementation}.

\begin{figure*}[t]
  \centering
  \includegraphics[width=\textwidth]{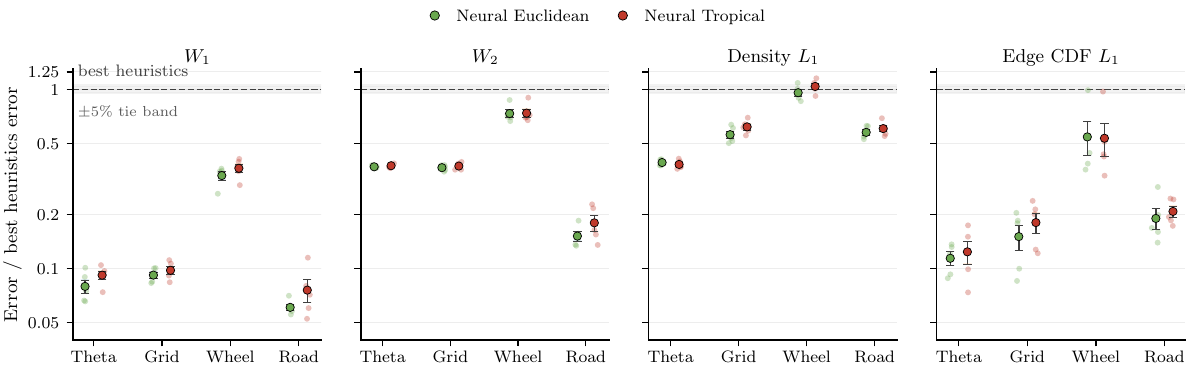}
  \caption{
     Relative error normalized by the best heuristics baseline (better of Ambient / Node-interp pushforward) for each (benchmark, metric). Faint points are individual seeds (n=5); markers are mean ± SEM. Shaded band marks ±5\% of the reference (effective tie). Lower is better; both neural methods consistently improve over the best heuristics.
  }
  \label{fig:q1-geometry}
\end{figure*}

\subsection{Scalability Analysis}
\label{sec:q2}

A central motivation for using amortized neural transport on metric graphs is
computational. Methods based on explicit pairwise graph costs become expensive
as the number of source and target samples increases, since intrinsic
shortest-path distances must be evaluated on increasingly large support pairs.
Our neural formulation instead trains a parametric transport model using
minibatches and applies the learned rule to held-out source samples by batched
inference. To study this scaling behavior, we reuse the synthetic metric-graph
benchmarks from Section~\ref{sec:q1}: theta, wheel, grid, and road graphs with
continuous edge-supported source and target measures. We vary the training
support size
\[
    n_{\mathrm{train}}\in\{128,256,512,1024\},
\]
and evaluate each fitted method on held-out source samples against the same
deterministic quadrature approximation of the analytical target law used in
Section~\ref{sec:q1}. We report intrinsic graph Wasserstein error
$W_1^\Gamma$ as the primary accuracy metric, with additional metrics deferred
to Appendix~\ref{app:impl-q2}. To isolate method cost from evaluation overhead,
we measure fit time and peak GPU memory without running the reference
Sinkhorn solve used for evaluation.

Figure~\ref{fig:q2_4x4} summarizes the sample-size scaling results. The
learned neural OT methods retain their accuracy advantage over the
node-interpolation and ambient pushforward heuristics as the support size
varies, while their memory use is controlled by minibatching and chunked
evaluation rather than by materializing a full pairwise cost matrix. The
tropical Abel--Jacobi and ambient Euclidean neural variants again exhibit
similar accuracy trends, indicating that both provide viable amortized
embeddings for graph-supported transport.

\subsection{Real-World Graph-Supported Generation on Manhattan Uber Pickups}
\label{sec:exp-q3-uber}

\begin{table}[t]
\centering
\caption{%
Million-sample Uber NYC pickup generation on the Manhattan road metric graph.
All neural models are trained on \(10^6\) source samples and \(10^6\) snapped training pickups.
Generation uses \(M=131{,}072\) source queries and a decoder support of \(131{,}072\) target atoms.
Graph \(W_1/W_2\) and density \(L^1\) are evaluated against held-out test pickups; Wasserstein metrics use a fixed \(16{,}384\)-point Sinkhorn subsample for tractability. The \emph{Eval.\ noise floor} row scores two independent subsamples of the held-out test pickups against each other under the identical \(16{,}384\)-point Sinkhorn budget --- the irreducible discrepancy from finite test-set sampling that lower-bounds what any method can attain.}
\label{tab:q3-million}
\begin{tabular}{lcccc}
\toprule
Method & Graph $W_1$ $\downarrow$ & Graph $W_2$ $\downarrow$ & Density $L_1$ $\downarrow$ &  Edge CDF $L_1$ $\downarrow$  \\
\midrule
Eval.\ noise floor & 0.542 {\scriptsize $\pm$ 0.007} & 0.720 {\scriptsize $\pm$ 0.010} & 0.257 {\scriptsize $\pm$ 0.002} & 0.263 {\scriptsize $\pm$ 0.002} \\
\midrule
Ambient pushforward & 7.273 {\scriptsize $\pm$ 0.103} & 4.738 {\scriptsize $\pm$ 0.106} & 1.398 {\scriptsize $\pm$ 0.008} & 1.954 {\scriptsize $\pm$ 0.024}  \\
Node interpolation & 10.048 {\scriptsize $\pm$ 0.138} & 8.029 {\scriptsize $\pm$ 0.252} & 1.717 {\scriptsize $\pm$ 0.008} & 2.598 {\scriptsize $\pm$ 0.019}  \\
Neural Euclidean (Log) & \underline{2.134 {\scriptsize $\pm$ 0.032}} & \underline{1.167 {\scriptsize $\pm$ 0.020}} & \underline{0.538 {\scriptsize $\pm$ 0.002}} & \underline{0.706 {\scriptsize $\pm$ 0.005}}  \\
Neural Euclidean (Gromov) & \textbf{0.662 {\scriptsize $\pm$ 0.010}} & \textbf{0.808 {\scriptsize $\pm$ 0.015}} & \textbf{0.345 {\scriptsize $\pm$ 0.001}} & \textbf{0.385 {\scriptsize $\pm$ 0.004}}  \\
\bottomrule
\end{tabular}
\end{table}

We evaluate our graph-supported neural OT framework on real urban mobility data
derived from the 2014 NYC Uber pickup records. Our goal is to study
large-scale spatial density generation on a metric graph: given a simple
length-uniform source measure on the road network, the model learns to generate
samples from the empirical pickup distribution supported on the same graph.
This experiment is intended as a real-data scalability stress test rather than
a controlled topology-identification benchmark. Unlike the synthetic Q1/Q2
experiments, the target law is not analytically specified; it is the empirical
distribution of observed pickup events after map matching to a Manhattan road
metric graph. The task should therefore be understood as graph-supported
spatial density generation, not temporal forecasting: the model is not
conditioned on date, weather, or time.

The training target measure consists of \(10^6\) Uber pickup locations
sampled from the training days, while the source measure is a length-uniform
distribution on the Manhattan road graph with the same number of samples. Train
and test pickups are drawn from disjoint calendar days to avoid temporal
correlation between fitting and evaluation. The resulting bridge-augmented
metric graph has first Betti number \(g=2416\), making the tropical
Abel--Jacobi model impractical at this scale. We therefore train the neural
Euclidean graph generator, which represents points by their projected road
coordinates and recovers outputs by nearest-polyline projection.

Dense support-dependent OT baselines cannot be run at the full million-sample
scale: a Sinkhorn solve with \(10^6\) source and \(10^6\) target atoms would
require \(10^{12}\) pairwise costs before iterations. Accordingly, the
heuristic baselines are trained on a restricted support of \(16{,}384\) samples.
All methods are evaluated on a separate held-out evaluation set of
\(131{,}072\) graph-supported samples drawn from the test days.
Table~\ref{tab:q3-million} reports the resulting held-out graph
distributional metrics, while Figure~\ref{fig:q3-uber-qualitative} visualizes the recovered spatial density
on the Manhattan road network. The generated
samples recover the dominant pickup structure on the Manhattan graph rather
than placing mass uniformly over roads or off-network, supporting the claim that
the proposed pipeline scales to million-sample empirical target measures while
producing realistic graph-supported samples.

\begin{figure}[t]
    \centering
    \includegraphics[width=0.9\linewidth]{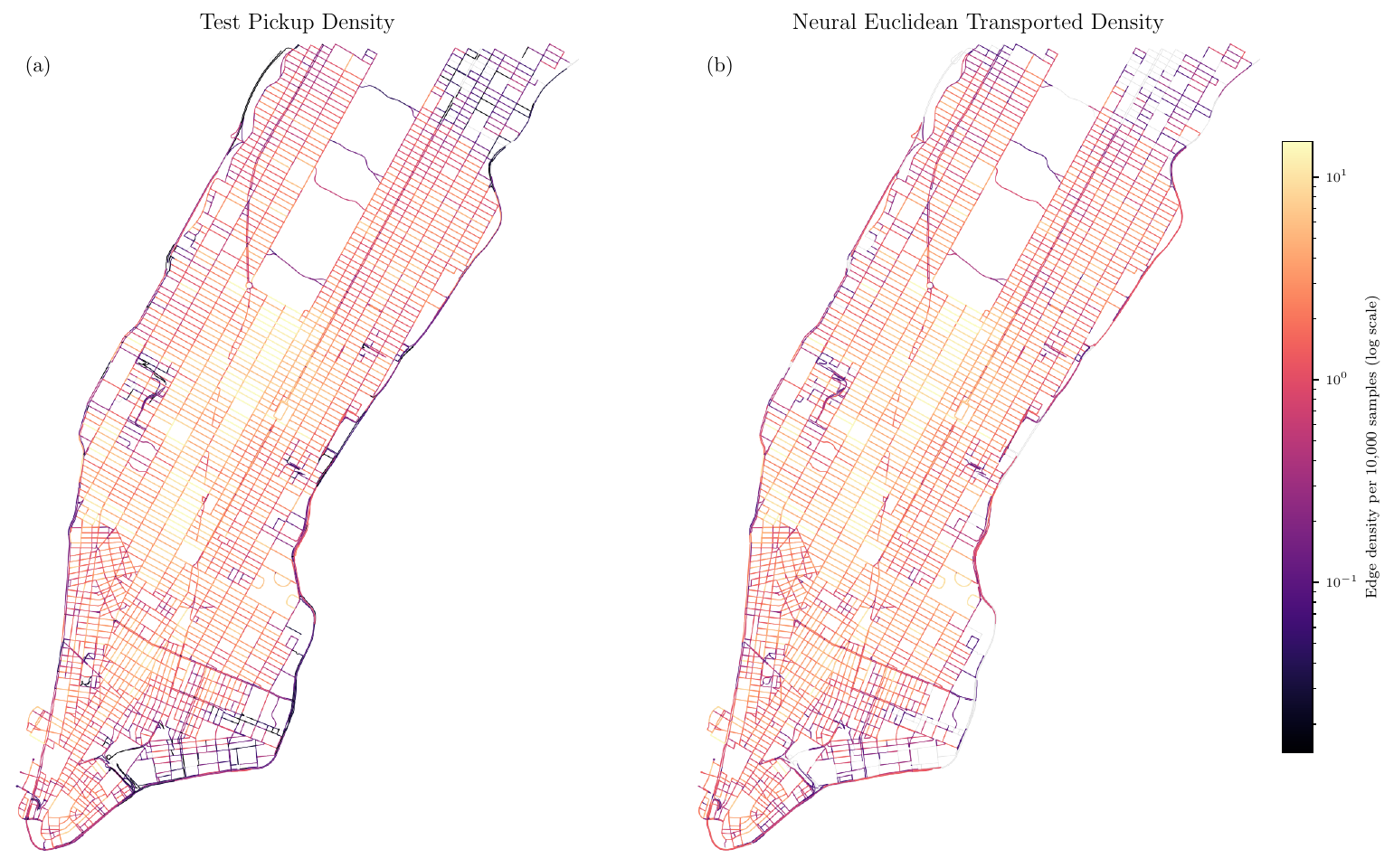}
    \caption{Manhattan road graph at the million-sample scale: per-edge density of (a) held-out test pickups and (b) points transported by Neural Euclidean (Gromov). Quantitative graph metrics are reported in Table~\ref{tab:q3-million}.}
    \label{fig:q3-uber-qualitative}
\end{figure}

\section{Discussion and Limitations}
We introduced an embedded neural OT framework for generative modelling of
probability distributions continuously supported on compact metric graphs. The
method solves a plan-based entropic OT problem after embedding the graph into a
smooth ambient space, then recovers graph-supported samples through a
heat-smoothed projection--pullback map. This avoids replacing the metric graph
by a purely discrete object: learned conditionals are trained against
graph-supported measures, and generated samples lie on edge interiors rather
than only at vertices. The framework applies to both extrinsic Euclidean
realizations and the intrinsic tropical Abel--Jacobi embedding, enabling the
same learning pipeline across transport geometries. Empirically, both neural
variants improve over heuristic baselines based on discrete graph OT on
controlled benchmarks, while the Euclidean model scales to the million-sample
Manhattan experiment where dense graph OT baselines are impractical.

Several limitations remain. The theoretical guarantee is asymptotic: it shows
recovery as neural expressivity increases and the heat scale vanishes, but does
not provide finite-sample or optimization rates, nor explicit rules for choosing
the entropic regularization and heat scale. At finite heat time, the
graph-supported transport law need not have the exact target marginal, since
heat smoothing moves mass off the embedded graph before projection. The
projection--pullback step also relies on nearest-point projection, which can be
non-unique away from the graph. Finally, the real-data experiment should be
viewed as graph-supported spatial density generation rather than forecasting or
causal modelling: the Uber model is not conditioned on time, weather, events, or
other covariates. Extending the framework to conditional or dynamically
evolving graph-supported distributions is a natural direction for future work.

\section*{Acknowledgements}
S.B. acknowledges support from the Novo Nordisk Foundation via The Novo Nordisk Young Investigator Award (NNF20OC0059309). S.B. acknowledges support from The Eric and Wendy Schmidt Fund For Strategic Innovation via the Schmidt Polymath Award (G-22-63345) which also supports A.Micheli. S.B. acknowledges support from the Novo Nordisk Foundation via the Global Pathogen Analysis Platform (GPAP) (NNF26SA0109818). Y.C.~is supported by Digital Futures Postdoctoral Fellowship. A.Monod is supported by the EPSRC AI Hub on Mathematical Foundations of Intelligence: An ``Erlangen Programme'' for AI No.~EP/Y028872/1. 

\section*{Contribution Statements}
Author contributions are reported using the CRediT (Contributor Roles Taxonomy).
\begin{itemize}
     \item \textbf{Alessandro Micheli}: Conceptualization; Methodology; Software; Formal analysis; Supervision; Investigation; Project administration; Visualization; Validation; Writing – original draft; Writing – review \& editing.
     \item \textbf{Yueqi Cao}: Formal analysis; Writing – original draft; Writing – review \& editing.  
     \item \textbf{Anthea Monod}: Writing – original draft; Writing – review \& editing.
     \item \textbf{Samir Bhatt}:
      Software; Funding acquisition; Resources; Visualization; Validation; Writing – original draft; Writing – review \& editing.
 \end{itemize}

\bibliographystyle{plain}
\bibliography{ref} 

\clearpage
\newpage
\appendix
\section{Review on Metric Graphs}
\label{review-metric-graphs-tropical}
\subsection{Geodesic Distance}

A \emph{finite metric graph} is a triple $\Gamma=(V,E,\ell)$ consisting of a finite, connected graph
$(V,E)$ and a length function $\ell:E\to(0,\infty)$, $e\mapsto \ell_e$. We realize $\Gamma$ as the
topological $1$--dimensional CW-complex obtained by replacing each edge $e=\{u,v\}\in E$ with a copy of
the closed interval $[0,\ell_e]$ and identifying its endpoints with the incident vertices $u$ and $v$.

To make the metric explicit, for each edge $e\in E$ fix an isometry (arc-length parametrization)
$\psi_e:[0,\ell_e]\to e\subset\Gamma$. If $\gamma:[a,b]\to\Gamma$ is a continuous path, define its
\emph{length} by
\[
\mathrm{length}(\gamma)
\;:=\;
\sup\Bigl\{\sum_{i=1}^m d_e\bigl(\gamma(t_{i-1}),\gamma(t_i)\bigr):
a=t_0<t_1<\cdots<t_m=b,\ m\in\mathbb N\Bigr\},
\]
where $d_e$ denotes the usual Euclidean distance along the unique edge containing the image of
$\gamma([t_{i-1},t_i])$ (and the sum is taken over a partition refined so that each subpath lies in a
single edge). Equivalently, after refining the partition so that each segment of $\gamma$ is contained
in one edge $e$, write $\gamma|_{[t_{i-1},t_i]}=\psi_e\circ \alpha_i$ with
$\alpha_i:[t_{i-1},t_i]\to[0,\ell_e]$ absolutely continuous; then
\[
\mathrm{length}(\gamma)
=\sum_{i=1}^m\int_{t_{i-1}}^{t_i} |\alpha_i'(t)|\,dt,
\]
and this value does not depend on the chosen parametrizations. The \emph{path metric} $d_\Gamma$ is then defined by
\[
d_\Gamma(x,y)\;:=\;\inf\bigl\{\mathrm{length}(\gamma):\ \gamma\text{ is a continuous path in }\Gamma
\text{ from }x\text{ to }y\bigr\}.
\]

\subsection{Construction of Tropical Embedding}
\label{app-tropical-jacobian}

\paragraph{Tropical Harmonic Forms.}

By fixing a combinatorial model $G$, a tropical harmonic $1$-form can be identified as a collection
$\omega=(\omega_e)_{e\in E}$ of constant functions on the oriented edges, and such that the \emph{Kirchhoff (balancing) condition} holds at every vertex
$v\in V$:
\[
\sum_{e\in E(v)} \sigma(v,e)\,a_e \;=\; 0,
\]
where $E(v)$ denotes the set of edges incident to $v$, and $\sigma(v,e)\in\{+1,-1\}$ equals $+1$ if the
chosen orientation on $e$ points away from $v$ and $-1$ if it points into $v$. The space of harmonic $1$-forms on $\Gamma$ is denoted by $\Omega^1_{\mathrm{harm}}(\Gamma)$. 

For any $\omega\in\Omega^1_{\mathrm{harm}}(\Gamma)$ and any piecewise $C^1$ path
$\gamma$ in $\Gamma$, the path integral $\int_\gamma \omega$ is defined edgewise. Integration over cycles
induces pairing
\[
H_1(\Gamma;\mathbb R)\times \Omega^1_{\mathrm{harm}}(\Gamma)\to\mathbb R,\qquad
([\sigma],\omega)\mapsto \int_\sigma \omega,
\]
and hence the \emph{tropical period map}
\[
\mathrm{Per}:H_1(\Gamma;\mathbb R)\longrightarrow \Omega^1_{\mathrm{harm}}(\Gamma)^\ast,\qquad
\mathrm{Per}([\sigma])(\omega):=\int_\sigma \omega.
\]
The tropical Period map is vector space isomorphism. In particular, it maps the homology group $H_1(\Gamma;\mathbb Z)$
to a full-rank lattice 
in $\Omega^1_{\mathrm{harm}}(\Gamma)^\ast$, denoted by $\Lambda_{\mathrm{per}}:=\mathrm{Per}\bigl(H_1(\Gamma;\mathbb Z)\bigr)$.

\paragraph{Tropical Polarization and Tropical Jacobian.}
The length function on $\Gamma$ induces positive weights on each combinatorial model $G$, which further defines an inner product on the chain space $C_1(G;\mathbb R)$ by
\[
\Big\langle \sum_{e\in E} a_e\, e,\ \sum_{e\in E} b_e\, e\Big\rangle_\ell
:= \sum_{e\in E} a_e b_e\,\ell_e,
\]
which restricts to an inner product on the closed subspace $H_1(G;\mathbb R)\subseteq C_1(G;\mathbb R)$. Since the inner product is invariant to edge subdivision, it is well-defined on $H_1(\Gamma;\mathbb R)$ for the metric graph $\Gamma$. Via the tropical period map
$\mathrm{Per}$, the inner product is pushed forward to $\Omega^1_{\text{harm}}(\Gamma)^*$, which is known as the \emph{tropical polarization}
\begin{equation}
\label{eq:tropical-polarization-dual}
\langle \alpha,\beta\rangle_{\mathrm{trop}}
\;:=\;
\Big\langle \mathrm{Per}^{-1}(\alpha),\, \mathrm{Per}^{-1}(\beta)\Big\rangle_\ell,
\qquad
\alpha,\beta\in \Omega^1_{\mathrm{harm}}(\Gamma)^\ast.
\end{equation}

The \emph{tropical Jacobian} of $\Gamma$ is the quotient torus
\begin{equation}
\label{eq:tropical-jacobian}
\mathfrak{J}(\Gamma)\;:= \Omega^1_{\mathrm{harm}}(\Gamma)^\ast \big/ \Lambda_{\mathrm{per}} \;\cong\; H_1(\Gamma;\mathbb R)\big/ H_1(\Gamma;\mathbb Z),
\end{equation}
equipped with the induced flat metric coming from the polarization
$\langle\cdot,\cdot\rangle_{\mathrm{trop}}$ on $\Omega^1_{\mathrm{harm}}(\Gamma)^\ast$.

We shall consider the following two representations of the tropical Jacobian under bases:

(1) Choosing a $\mathbb Z$-basis $\{\sigma_1,\dots,\sigma_g\}$ of $H_1(\Gamma;\mathbb Z)$ gives 
identifications 

$$
H_1(\Gamma;\mathbb Z)\cong\mathbb Z^g,\qquad H_1(\Gamma;\mathbb R)\cong\mathbb R^g,
$$

under which the inner product $\langle  \cdot,\cdot\rangle_\ell$ is represented by the positive definite matrix $\Sigma=[\langle \sigma_i,\sigma_j\rangle_\ell]_{i,j}$. The flat torus is isometric to the tropical Jacobian by construction;

(2) Let $\Omega^1_{\mathrm{harm}}(\Gamma)_{\mathbb Z}$ be the lattice of tropical harmonic 1-forms with integer coefficients. A $\mathbb Z$-basis $\{\sigma_1,\ldots,\sigma_g\}$ of $H_1(\Gamma;\mathbb Z)$ canonically determines a $\mathbb Z$-basis $\{\omega_1,\ldots,\omega_g\}$ of $\Omega^1_{\mathrm{harm}}(\Gamma)_{\mathbb Z}$, which gives identifications

$$
\Omega^1_{\mathrm{harm}}(\Gamma)^*_{\mathbb Z}\cong \mathbb Z^g,\qquad \Omega^1_{\mathrm{harm}}(\Gamma)^*\cong \mathbb R^g.
$$

Via the tropical period map, the lattice $\Lambda_{\mathrm{per}}$ is given by 

$$
\Lambda_{\mathrm{per}} = \mathrm{span}_{\mathbb Z}\{\mathrm{Per}(\sigma_1),\cdots, \mathrm{Per}(\sigma_g)\},
$$

where each $\mathrm{Per}(\sigma_i)$ is a $g$-dimensional vector in $\mathbb R^g$. Denote the corresponding matrix by $\Lambda$. The tropical polarization $\langle\cdot,\cdot\rangle_{\mathrm{trop}}$ is represented by the matrix $\Xi = \Lambda^T\Sigma\Lambda$.

\paragraph{Tropical Abel--Jacobi Map.} 

Fix a basepoint $p\in\Gamma$. The \emph{tropical Abel--Jacobi map}
$\Phi_p:\Gamma\to\mathfrak{J}(\Gamma)$ sends any point $x\in\Gamma$ to the integral operator over a path from $p$ to $x$. That is,
\[
\Phi_p(x)\;:=\;\Bigl[\ \omega\longmapsto \int_\gamma \omega\ \Bigr]\in
\Omega^1_{\mathrm{harm}}(\Gamma)^\ast / \Lambda_{\mathrm{per}},
\]
where $\gamma$ is any path from $p$ to $x$.  This is well-defined because two such paths differ by a
cycle in $H_1(\Gamma;\mathbb Z)$, hence their associated functionals differ by an element of
$\Lambda_{\mathrm{per}}$. Changing the basepoint translates $\Phi_p$ by a constant element of
$\mathfrak{J}(\Gamma)$.

A key property of the tropical Abel--Jacobi map is that $\Phi_p$ is affine along edges. To make this
explicit, fix an edge $e\in E(\Gamma)$ and an arc-length parametrization
$\varphi_e:[0,\ell_e]\to e$. Under fixed basis we have identification $\mathfrak{J}(\Gamma)\cong \mathbb R^g/\Lambda$. 
Let $\pi:\mathbb R^g\to \mathbb R^g/\Lambda$ be the quotient map. Then the restriction
\[
f:=\Phi_p\circ\varphi_e:[0,\ell_e]\to \mathbb T_\Lambda
\]
admits an affine lift to $\mathbb R^g$: there exist $\widetilde z_e\in\mathbb R^g$ and $u_e\in\mathbb R^g$
such that
\begin{equation}\label{eq:local-lift}
\pi(\widetilde z_e+t u_e)=\Phi_p(\varphi_e(t))
\qquad\text{for all }t\in[0,\ell_e].
\end{equation}
In particular,
\begin{equation}\label{eq:set-def}
\Phi_p(e)=\pi(S_e),
\qquad
S_e:=\{\widetilde z_e+t u_e:\ t\in[0,\ell_e]\}\subset\mathbb R^g,
\end{equation}
so $\Phi_p(e)$ is the image under $\pi$ of a Euclidean line segment.

\clearpage
\newpage

\section{Bridge Elimination by Parallel-Edge Augmentation}
\label{app-bridges}
The results in the main text are stated for \emph{bridgeless} metric graphs because bridges are the only
obstruction to the injectivity of the
tropical Abel--Jacobi map.  In this appendix we
reduce the general case to the bridgeless one by a canonical augmentation: for each bridge we add a
single \emph{parallel} (``virtual'') edge.  Crucially, when the virtual edges are no shorter than the
original bridges, this augmentation de-bridges a metric graph while preserving all pairwise geodesic distances
between points of the original graph.

\subsection{Parallel-Edge Augmentation}

Let $\Gamma$ be a connected metric graph obtained from a finite connected combinatorial graph
$G=(V,E)$ and a length function $\ell:E\to(0,\infty)$ by identifying each edge $e\in E$ with the
interval $[0,\ell(e)]$ and gluing endpoints according to the incidence relations.  For an edge $e$ we
write $e^\circ:=(0,\ell(e))$ for its interior.

\begin{definition}
An edge $b\in E$ is a \emph{bridge} of $\Gamma$ if the topological space $\Gamma\setminus b^\circ$
is disconnected. We denote by $\mathfrak{B}(\Gamma)\subseteq E$ the set of bridges.
\end{definition}

For each bridge $b\in\mathfrak B(\Gamma)$ with endpoints
\(\partial b=\{u_b,v_b\}\), add a new edge \(\bar b\) connecting the same two
endpoints. We call \(\bar b\) a virtual edge and choose its length to be
\begin{equation}
\label{eq:parallel-length}
\ell(\bar b):=\ell(b)+\delta,
\qquad \delta\ge 0.
\end{equation}
In practice, one may take \(\delta=0\), allowing \(\delta>0\) makes the virtual
edge strictly longer while preserving the arguments below. The augmented metric
graph is
\begin{equation}
\label{eq:augmented-graph}
\widetilde\Gamma
:=\Gamma \ \cup \ \bigcup_{b\in\mathfrak{B}(\Gamma)} \bar b,
\end{equation}
equipped with the induced shortest-path metric \(d_{\widetilde\Gamma}\). We
write
\[
\iota:\Gamma\hookrightarrow\widetilde\Gamma
\]
for the canonical inclusion, and
\[
E_{\mathrm{virt}}:=\bigcup_{b\in\mathfrak B(\Gamma)}\bar b
\]
for the union of all virtual edges.

The new edge \(\bar b\) turns the old bridge \(b\) into a two-edge cycle. Thus
neither \(b\) nor \(\bar b\) can disconnect the augmented graph when removed.
Edges that were not bridges before remain non-bridges after adding more paths.
This gives the following basic property.

\begin{lemma}
\label{lem:augmented-bridgeless}
The metric graph $\widetilde\Gamma$ has no bridges.
\end{lemma}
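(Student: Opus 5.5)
The plan is to show, edge by edge, that no edge of $\widetilde\Gamma$ is a bridge, using the definition that $e$ is a bridge iff $\widetilde\Gamma\setminus e^\circ$ is disconnected. Equivalently, I will show that for every edge $e$ of $\widetilde\Gamma$ with endpoints $u,v$, the endpoints remain joined by a path in $\widetilde\Gamma\setminus e^\circ$. This suffices because every point of $\widetilde\Gamma\setminus e^\circ$ lies on some other closed edge, and is therefore connected within that edge to one of its endpoints, hence to a vertex. The vertex set is unchanged, and $\Gamma$ is connected, so all vertices lie in the same component of $\widetilde\Gamma$.

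The edges of $\widetilde\Gamma$ fall into three types, which I will treat separately.
\begin{enumerate}
\item[(i)] If $e=b$ is an original bridge, the virtual edge $\bar b$ joins $u_b$ and $v_b$ and is disjoint from $b^\circ$. Hence $u_b$ and $v_b$ remain connected through $\bar b$.
\item[(ii)] If $e=\bar b$ is a virtual edge, the original edge $b$ joins the same endpoints and survives the removal of $\bar b^\circ$.
\item[(iii)] If $e$ is an original non-bridge, then by definition $\Gamma\setminus e^\circ$ is connected. Since $\Gamma\setminus e^\circ\subseteq\widetilde\Gamma\setminus e^\circ$, its endpoints are connected by a path in $\Gamma\setminus e^\circ$, which is also a path in the larger space.
\end{enumerate}

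To conclude, I will fix any point $x\in\widetilde\Gamma\setminus e^\circ$ and join it along its own edge to a vertex. By connectivity of $\Gamma$, that vertex is joined by a path in $\Gamma$ to the basepoint vertex $u$. Whenever this path traverses $e$, I will replace the traversal by the detour built in the relevant case above. This yields path-connectedness, hence connectedness, of $\widetilde\Gamma\setminus e^\circ$.

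I expect no step to be a real obstacle. The only point needing care is case (iii), where one must note explicitly that adding edges can only enlarge the path set. One must also handle correctly the possibility that a bridge and its virtual copy form a pair of parallel edges, since the cases rely only on endpoints and not on any simplicity of the graph.
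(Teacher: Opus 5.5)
Your proof is correct and follows the paper's own argument: the same three-case split (virtual edge, original bridge, original non-bridge), using the parallel partner $b$ or $\bar b$ in the first two cases and the inclusion $\Gamma\setminus e^\circ\subseteq\widetilde\Gamma\setminus e^\circ$ in the third. Your explicit reduction, showing that joining the endpoints of $e$ in $\widetilde\Gamma\setminus e^\circ$ suffices via detour substitution along an edge path, spells out a step the paper leaves implicit in cases (i) and (ii).
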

The proof of Lemma~\ref{lem:augmented-bridgeless} is postponed to Appendix~\ref{app-proof-lemma-augmented-bridgeless}.

For any $\delta\ge 0$, the augmentation does not change shortest-path distances
between points of the original graph. In particular, for $\delta=0$, distances are preserved even though shortest
paths in $\widetilde\Gamma$ may fail to be unique because one can swap $b$ and $\bar b$ at equal cost. In fact we can show that $\iota$ is an isometric embedding as stated in the following Lemma.

\begin{lemma}
\label{lem:isometric-inclusion}
Assume $\ell(\bar b)\ge \ell(b)$ for all $b\in\mathfrak{B}(\Gamma)$. Then for all $x,y\in\Gamma$,
\[
d_{\widetilde\Gamma}\big(\iota(x),\iota(y)\big)=d_\Gamma(x,y).
\]
In particular, $\iota$ is an isometric embedding of $(\Gamma,d_\Gamma)$ into $(\widetilde\Gamma,d_{\widetilde\Gamma})$.
\end{lemma}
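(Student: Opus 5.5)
We prove the two inequalities $d_{\widetilde\Gamma}(\iota(x),\iota(y))\le d_\Gamma(x,y)$ and $d_{\widetilde\Gamma}(\iota(x),\iota(y))\ge d_\Gamma(x,y)$ separately.

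\emph{The easy inequality.} Any continuous path in $\Gamma$ from $x$ to $y$ is also a path in $\widetilde\Gamma$ from $\iota(x)$ to $\iota(y)$. Its length is the same in both spaces, since length is computed edgewise and the original edges keep their lengths. Taking the infimum gives $d_{\widetilde\Gamma}(\iota(x),\iota(y))\le d_\Gamma(x,y)$.

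\emph{The reverse inequality.} We show that every path $\gamma$ in $\widetilde\Gamma$ from $\iota(x)$ to $\iota(y)$ can be replaced by a path $\gamma'$ in $\Gamma$ with $\mathrm{length}(\gamma')\le\mathrm{length}(\gamma)$. It suffices to treat paths that are finite concatenations of monotone edge segments. This is the usual reduction: the infimum defining the metric on a finite metric graph is attained, or approximated, by such piecewise-monotone paths, because backtracking within an edge only adds length.

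\emph{The retraction.} Define a map $r:\widetilde\Gamma\to\Gamma$ that is the identity on $\Gamma$. On each virtual edge $\bar b\cong[0,\ell(b)+\delta]$, let $r$ send the point at parameter $s$ to the point of $b$ at parameter $\ell(b)\,s/(\ell(b)+\delta)$, with orientations chosen so that $u_b$ and $v_b$ are fixed. The map $r$ is continuous, since it agrees with the identity at the shared endpoints $u_b,v_b$.

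On $\bar b$ the map $r$ is affine with Lipschitz constant $\ell(b)/(\ell(b)+\delta)\le 1$. This is the step where the hypothesis $\ell(\bar b)\ge\ell(b)$ is used. On original edges $r$ is an isometry. Hence $r$ is $1$-Lipschitz edgewise, so $\mathrm{length}(r\circ\gamma)\le\mathrm{length}(\gamma)$ for every path $\gamma$. Since $x,y\in\Gamma$ are fixed by $r$, the path $r\circ\gamma$ joins $x$ to $y$ inside $\Gamma$. Therefore $d_\Gamma(x,y)\le\mathrm{length}(\gamma)$, and taking the infimum over $\gamma$ gives $d_\Gamma(x,y)\le d_{\widetilde\Gamma}(\iota(x),\iota(y))$.

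\emph{Main obstacle.} The only delicate point is verifying rigorously that composing with $r$ does not increase path length under the supremum-over-partitions definition. A partition segment that crosses a vertex between $b$ and $\bar b$ must be refined so that each piece lies in a single edge. The paper's definition of length already permits this refinement, and on each piece the edgewise $1$-Lipschitz bound applies. Once this is checked, combining the two inequalities gives the equality, and $\iota$ is an isometric embedding.
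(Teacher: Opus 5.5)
Your proof is correct and takes the same approach as the paper. The easy inequality follows from inclusion of paths, and the reverse inequality comes from sending the parts of a path lying in a virtual edge $\bar b$ onto the bridge $b$ by a $1$-Lipschitz map fixing $u_b,v_b$. The only difference is cosmetic: the paper applies the truncation $T_b(t)=\min\{t,\ell(b)\}$ to each maximal sub-arc in $\bar b$, whereas you compose the whole path with the linear retraction $r$, which also spares you the bookkeeping over possibly infinitely many sub-arcs.
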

The proof of Lemma~\ref{lem:isometric-inclusion} is postponed to Appendix~\ref{app-proof-lem:isometric-inclusion}.

We also define a retraction \(r:\widetilde\Gamma\to\Gamma\) as follows:  
Fix arc-length parametrizations \(\psi_b:[0,\ell(b)]\to b\) and
\(\psi_{\bar b}:[0,\ell(\bar b)]\to \bar b\) such that \(\psi_b(0)=\psi_{\bar b}(0)=u_b\) and \(\psi_b(\ell(b))=\psi_{\bar b}(\ell(\bar b))=v_b\). On the original subgraph \(\Gamma\subset\widetilde\Gamma\), we set \(r|_\Gamma=\mathrm{id}_\Gamma\). On each virtual edge \(\bar b\), define \(r\) in arc-length coordinates by
\begin{equation}
\label{eq:retraction}
r\bigl(\psi_{\bar b}(\bar s)\bigr)
\;:=\;
\psi_b\!\left(\frac{\ell(b)}{\ell(\bar b)}\,\bar s\right),
\qquad \bar s\in[0,\ell(\bar b)].
\end{equation}

By construction, \(r\) agrees with the identity on \(\Gamma\), hence it is a left inverse of
inclusion:
\begin{equation}
\label{eq:retraction-property}
r\circ\iota=\mathrm{id}_\Gamma.
\end{equation}
Notice that \(r\) is continuous on each edge and the endpoint conditions ensure that the definitions
match at vertices, so \(r\) is continuous on all of \(\widetilde\Gamma\). Assuming \(\delta\ge 0\), since on each virtual edge \(\bar b\),
the map \(\bar s\mapsto (\ell(b)/\ell(\bar b))\,\bar s\) has slope \(\ell(b)/\ell(\bar b)\le 1\), so
\(r|_{\bar b}\) is \(1\)-Lipschitz with respect to arc-length distance along the edge. Since the shortest-path
metric on \(\widetilde\Gamma\) is obtained by minimizing lengths of piecewise-edge paths, it follows
that \(r\) is globally \(1\)-Lipschitz:
\[
d_\Gamma\bigl(r(x),r(y)\bigr)\;\le\; d_{\widetilde\Gamma}(x,y),
\qquad x,y\in\widetilde\Gamma.
\]

\subsection{Pushforward Measures}

Next we show that augmentation of edges will not affect optimal transport on the original metric graph. Let \(\mathbb P_\Gamma,\mathbb Q_\Gamma\) be Borel probability measures on \(\Gamma\).
We have pushforward measures on \(\widetilde\Gamma\) induced by $\iota$:
\begin{equation}
\label{eq:lifts}
\widetilde{\mathbb P}_\Gamma:=\iota_\#\mathbb P_\Gamma,\qquad
\widetilde{\mathbb Q}_\Gamma:=\iota_\#\mathbb Q_\Gamma.
\end{equation}
Equivalently, for every Borel set \(A\subset\widetilde\Gamma\),
\[
\widetilde{\mathbb P}_\Gamma(A)=\mathbb P_\Gamma\!\bigl(\iota^{-1}(A)\bigr),\qquad
\widetilde{\mathbb Q}_\Gamma(A)=\mathbb Q_\Gamma\!\bigl(\iota^{-1}(A)\bigr).
\]
Since \(\iota(\Gamma)\cap E_{\mathrm{virt}}=\varnothing\), we have \(\iota^{-1}(E_{\mathrm{virt}})=\varnothing\), hence the pushforward measures charge no virtual mass:
\begin{equation}
\label{eq:no-virt-mass}
\widetilde{\mathbb P}_\Gamma(E_{\mathrm{virt}})=\widetilde{\mathbb Q}_\Gamma(E_{\mathrm{virt}})=0.
\end{equation}
Moreover, because \(r\circ\iota=\mathrm{id}_\Gamma\) (cf.~\eqref{eq:retraction-property}), we recover
the original measures after pushing forward by \(r\). Concretely, for every Borel set \(B\subset\Gamma\),
\[
(r_\#\widetilde{\mathbb P}_\Gamma)(B)
=\widetilde{\mathbb P}_\Gamma\!\bigl(r^{-1}(B)\bigr)
=\mathbb P_\Gamma\!\bigl(\iota^{-1}(r^{-1}(B))\bigr)
=\mathbb P_\Gamma\!\bigl((r\circ\iota)^{-1}(B)\bigr)
=\mathbb P_\Gamma(B),
\]
and hence \(r_\#\widetilde{\mathbb P}_\Gamma=\mathbb P_\Gamma\), and similarly \(r_\#\widetilde{\mathbb Q}_\Gamma=\mathbb Q_\Gamma\)..

The same observation applies to transport plans. If \(\widetilde\pi\in\Pi(\widetilde{\mathbb P}_\Gamma,\widetilde{\mathbb Q}_\Gamma)\) is any coupling on \(\widetilde\Gamma\),
define
\begin{equation}
\label{eq:pushback-plan}
\pi:=(r\times r)_\#\widetilde\pi.
\end{equation}
Then \(\pi\) is a coupling of \(\mathbb P_\Gamma\) and \(\mathbb Q_\Gamma\). Indeed, for any Borel set \(A\subset\Gamma\),
\[
\pi(A\times \Gamma)
=\widetilde\pi\bigl((r\times r)^{-1}(A\times\Gamma)\bigr)
=\widetilde\pi\bigl(r^{-1}(A)\times \widetilde\Gamma\bigr)
=\widetilde{\mathbb P}_\Gamma\bigl(r^{-1}(A)\bigr)
=(r_\#\widetilde{\mathbb P}_\Gamma)(A)
=\mathbb P_\Gamma(A),
\]
and similarly \(\pi(\Gamma\times B)=\mathbb Q_\Gamma(B)\) for all Borel \(B\subset\Gamma\). Hence
\(\pi\in\Pi(\mathbb P_\Gamma,\mathbb Q_\Gamma)\).

\begin{remark}
\label{rem:cost-preservation}
Assume $\delta\ge 0$. By Lemma~\ref{lem:isometric-inclusion}, for any cost of the form
$c(x,y)=h(d_\Gamma(x,y))$ with $h:\mathbb{R}_+\to\mathbb{R}_+$, one has
\[
c(x,y)=h(d_\Gamma(x,y))=h\!\left(d_{\widetilde\Gamma}\big(\iota(x),\iota(y)\big)\right).
\]
Hence, when both marginals are supported on $\iota(\Gamma)$, optimization problems posed on $\Gamma$ and on
$\widetilde\Gamma$ with such costs have the same objective value, while $\widetilde\Gamma$ is bridgeless by
Lemma~\ref{lem:augmented-bridgeless}.
\end{remark}

\subsection{Compatibility with Optimal Transport}
\label{app:bridge-ot}

We now explain how the augmentation interacts with the torus formulation in the main text.
Since \(\widetilde\Gamma\) is bridgeless, the tropical Abel--Jacobi map
\[
\widetilde\Phi_p:\widetilde\Gamma\to\mathbb T^{\tilde g}
\]
is defined as in
the bridgeless setting. Here \(\tilde g=g+|\mathfrak{B}(\Gamma)|\) since adding one parallel
edge for each bridge increases the first Betti number by \(1\).

\paragraph{Physical Torus Marginals.} We isolate the part of the torus image coming from the original graph by defining the \emph{physical curve}
\begin{equation}
\label{eq:physical-curve}
\widetilde\Phi_p(\Gamma):=\widetilde\Phi_p(\iota(\Gamma))\subset \widetilde\Phi_p(\widetilde\Gamma),
\end{equation}
and the corresponding torus marginals
\[
\widetilde{\mathbb P}_{\mathfrak{J}}:=(\widetilde\Phi_p)_\#\widetilde{\mathbb P}_\Gamma,
\qquad
\widetilde{\mathbb Q}_{\mathfrak{J}}:=(\widetilde\Phi_p)_\#\widetilde{\mathbb Q}_\Gamma .
\]

\begin{lemma}
\label{lem:torus-support-physical}
The measures $\widetilde{\mathbb P}_{\mathfrak{J}}$ and $\widetilde{\mathbb Q}_{\mathfrak{J}}$ are supported on
$\widetilde\Phi_p(\Gamma)$.
\end{lemma}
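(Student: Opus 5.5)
The plan is to show that $\widetilde{\mathbb P}_{\mathfrak J}$ gives full mass to the set $\widetilde\Phi_p(\Gamma)$, and that this set is closed. A measure that gives full mass to a closed set has its support contained in that set. We treat $\widetilde{\mathbb Q}_{\mathfrak J}$ identically.

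\textbf{Full mass.} Unwind the pushforwards: $\widetilde{\mathbb P}_{\mathfrak J}=(\widetilde\Phi_p)_\#\iota_\#\mathbb P_\Gamma=(\widetilde\Phi_p\circ\iota)_\#\mathbb P_\Gamma$. Then
\[
\widetilde{\mathbb P}_{\mathfrak J}\bigl(\widetilde\Phi_p(\Gamma)\bigr)
=\mathbb P_\Gamma\bigl((\widetilde\Phi_p\circ\iota)^{-1}(\widetilde\Phi_p(\iota(\Gamma)))\bigr)
=\mathbb P_\Gamma(\Gamma)=1,
\]
because the preimage of the image of a map always contains its whole domain $\Gamma$. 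This step does not need injectivity of $\widetilde\Phi_p$. Equivalently, one may use \eqref{eq:no-virt-mass}: the lifted measure gives no mass to $E_{\mathrm{virt}}$, so its image lies on the physical curve.

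\textbf{Closedness.} For measurability, and to pass from full mass to support, we need $\widetilde\Phi_p(\Gamma)$ to be a closed subset of the torus. The map $\widetilde\Phi_p$ is continuous, since by \eqref{eq:local-lift} it is affine along edges after projection to the torus, and these edgewise formulas agree at vertices. The inclusion $\iota$ is continuous; by Lemma~\ref{lem:isometric-inclusion} it is even isometric. Since $\Gamma$ is compact, $\widetilde\Phi_p(\iota(\Gamma))$ is compact, and hence closed in the Hausdorff torus $\mathbb T^{\tilde g}$. Alternatively, \eqref{eq:set-def} shows directly that it is a finite union of images $\pi(S_e)$ of compact segments.

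\textbf{Conclusion.} The complement $U$ of $\widetilde\Phi_p(\Gamma)$ is open and has measure zero. Every point of $U$ therefore has a null open neighbourhood and lies outside the support, which gives $\operatorname{spt}\widetilde{\mathbb P}_{\mathfrak J}\subseteq\widetilde\Phi_p(\Gamma)$. The argument for $\widetilde{\mathbb Q}_{\mathfrak J}$ is the same.

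The only point needing care is the closedness and continuity in the second step, namely the continuity of $\widetilde\Phi_p$ across vertices. This follows from the well-definedness of the path integral modulo $\Lambda_{\mathrm{per}}$.
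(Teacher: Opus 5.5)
Your proof is correct and follows the paper's argument. Both rest on the fact that a measure concentrated on $\iota(\Gamma)$ pushes forward under $\widetilde\Phi_p$ to a measure concentrated on $\widetilde\Phi_p(\iota(\Gamma))$: the paper phrases this as every Borel set disjoint from the image having null preimage, and you phrase it as the image having full mass. Your additional check that $\widetilde\Phi_p(\Gamma)$ is compact, hence closed and Borel, is left implicit in the paper, and it is what justifies reading the conclusion as a statement about the topological support rather than mere concentration.
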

The proof of Lemma~\ref{lem:torus-support-physical} is postponed to Appendix~\ref{app-proof-lem:torus-support-physical}.

Consider the entropic OT problem of~\eqref{eq:entropic-ot}. By Lemma~\ref{lem:torus-support-physical}, both marginals are supported on the \emph{physical curve}
\(\widetilde\Phi_p(\Gamma)\subset\mathbb T^{\tilde g}\). Consequently, any coupling between them (and
hence any optimal plan) cannot place mass outside the physical part of the torus.

\begin{lemma}
\label{lem:torus-plan-supported}
Every \(\pi\in\Pi(\widetilde{\mathbb P}_{\mathfrak{J}},\widetilde{\mathbb Q}_{\mathfrak{J}})\) is concentrated on
\(\widetilde\Phi_p(\Gamma)\times \widetilde\Phi_p(\Gamma)\); equivalently,
\[
\pi\!\left(\bigl(\mathbb T^{\tilde g}\times\mathbb T^{\tilde g}\bigr)\setminus
\bigl(\widetilde\Phi_p(\Gamma)\times \widetilde\Phi_p(\Gamma)\bigr)\right)=0.
\]
\end{lemma}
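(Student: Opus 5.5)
The argument is a marginal computation: any coupling has its mass inside $A\times\mathbb T^{\tilde g}$ or $\mathbb T^{\tilde g}\times A$, where $A:=\widetilde\Phi_p(\Gamma)$, and the marginal conditions force both of these sets to be null. Lemma~\ref{lem:torus-support-physical} supplies the needed marginal information, namely $\widetilde{\mathbb P}_{\mathfrak J}(\mathbb T^{\tilde g}\setminus A)=0$ and $\widetilde{\mathbb Q}_{\mathfrak J}(\mathbb T^{\tilde g}\setminus A)=0$.

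First I check that $A$ is Borel, so that these quantities make sense. The set $\iota(\Gamma)$ is compact, being a finite union of closed edges, and $\widetilde\Phi_p$ is continuous, being affine on each edge by \eqref{eq:local-lift}. Hence $A$ is compact, and in particular closed and Borel.

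Second, I decompose the complement of the product:
\[
\bigl(\mathbb T^{\tilde g}\times\mathbb T^{\tilde g}\bigr)\setminus (A\times A)
=\bigl((\mathbb T^{\tilde g}\setminus A)\times\mathbb T^{\tilde g}\bigr)\cup\bigl(\mathbb T^{\tilde g}\times(\mathbb T^{\tilde g}\setminus A)\bigr).
\]
By subadditivity and the marginal constraints of $\pi\in\Pi(\widetilde{\mathbb P}_{\mathfrak J},\widetilde{\mathbb Q}_{\mathfrak J})$, the $\pi$-measure of this set is at most
\[
\widetilde{\mathbb P}_{\mathfrak J}(\mathbb T^{\tilde g}\setminus A)+\widetilde{\mathbb Q}_{\mathfrak J}(\mathbb T^{\tilde g}\setminus A).
\]
Both terms vanish by Lemma~\ref{lem:torus-support-physical}, which gives the claim.

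The only point requiring care is how ``supported on'' is read in Lemma~\ref{lem:torus-support-physical}. If it means that the topological support is contained in $A$, then closedness of $A$ gives full mass on $A$. If it means concentration, this is immediate. Either way the argument goes through, and no step is a genuine obstacle; closedness of $A$ is the one detail worth stating explicitly.
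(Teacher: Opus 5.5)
Your proposal is correct and follows essentially the same argument as the paper: split the complement of $\widetilde\Phi_p(\Gamma)\times\widetilde\Phi_p(\Gamma)$ into the two strips, bound their $\pi$-mass by the marginal masses of $\mathbb T^{\tilde g}\setminus\widetilde\Phi_p(\Gamma)$, and conclude from Lemma~\ref{lem:torus-support-physical} that both vanish. Your extra check that $\widetilde\Phi_p(\Gamma)$ is compact, hence closed and Borel, is a detail the paper leaves implicit, and it is correct.
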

The proof of Lemma~\ref{lem:torus-plan-supported} is postponed to Appendix~\ref{app-proof-lem:torus-plan-supported}.

If a coupling on $\widetilde{\Gamma}$ has null measure on virtual edges, it descends to be a coupling on $\Gamma$. Therefore, we are able to show that parallel-edge augmentation does not change optimal transport plans. 

\begin{proposition}
\label{prop:OT-compat-bridge}
Let $\pi_J^\star$ be an optimal plan for \eqref{eq:entropic-ot}. Then there exists a plan
$\widetilde\pi_\Gamma^\star\in\Pi(\widetilde{\mathbb P}_\Gamma,\widetilde{\mathbb Q}_\Gamma)$ such that
\begin{equation}
\label{eq:pushforward-opt}
(\widetilde\Phi_p\times \widetilde\Phi_p)_\#\widetilde\pi_\Gamma^\star=\pi_J^\star.
\end{equation}
Moreover, $\widetilde\pi_\Gamma^\star$ is concentrated on $\iota(\Gamma)\times\iota(\Gamma)$, and therefore
\[
\widetilde\pi_\Gamma^\star\big(E_{\mathrm{virt}}\times\widetilde\Gamma\big)=
\widetilde\pi_\Gamma^\star\big(\widetilde\Gamma\times E_{\mathrm{virt}}\big)=0.
\]
Identifying $\iota(\Gamma)$ with $\Gamma$, the restriction of $\widetilde\pi_\Gamma^\star$ defines a coupling
$\pi_\Gamma^\star\in\Pi(\mathbb P_\Gamma,\mathbb Q_\Gamma)$ on the original graph.
\end{proposition}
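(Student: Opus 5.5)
The plan is to build the lifted plan by pulling $\pi_J^\star$ back through the inverse of the augmented Abel--Jacobi map. The argument rests on three points: this inverse exists and is Borel on the image, Lemma~\ref{lem:torus-plan-supported} confines $\pi_J^\star$ to the physical curve, and every marginal and concentration claim then reduces to a pushforward computation.

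\emph{Step 1: a homeomorphism onto the image.} By Lemma~\ref{lem:augmented-bridgeless}, $\widetilde\Gamma$ is bridgeless. I will invoke the fact, stated at the start of Appendix~\ref{app-bridges}, that bridges are the only obstruction to injectivity of the tropical Abel--Jacobi map. This gives that $\widetilde\Phi_p:\widetilde\Gamma\to\mathbb T^{\tilde g}$ is injective. It is continuous, since it is affine on each edge by \eqref{eq:local-lift} and consistent at vertices. Since $\widetilde\Gamma$ is compact and $\mathbb T^{\tilde g}$ is Hausdorff, $\widetilde\Phi_p$ is a homeomorphism onto its compact image. Consequently $\widetilde\Phi_p^{-1}:\widetilde\Phi_p(\widetilde\Gamma)\to\widetilde\Gamma$ is continuous, hence Borel. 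The set $\widetilde\Phi_p(\Gamma)=\widetilde\Phi_p(\iota(\Gamma))$ is compact and therefore Borel.

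\emph{Step 2: definition and pushforward identity.} By Lemma~\ref{lem:torus-plan-supported}, $\pi_J^\star$ is concentrated on $\widetilde\Phi_p(\Gamma)\times\widetilde\Phi_p(\Gamma)$. I restrict $\pi_J^\star$ to this set and set
\[
\widetilde\pi_\Gamma^\star:=(\widetilde\Phi_p^{-1}\times\widetilde\Phi_p^{-1})_\#\pi_J^\star ,
\]
extended by zero to $\widetilde\Gamma\times\widetilde\Gamma$.

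The identity \eqref{eq:pushforward-opt} follows because $\widetilde\Phi_p\circ\widetilde\Phi_p^{-1}=\mathrm{id}$ on the image, and $\pi_J^\star$ gives full mass there.

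For the first marginal, $\mathrm{pr}_1\circ(\widetilde\Phi_p^{-1}\times\widetilde\Phi_p^{-1})=\widetilde\Phi_p^{-1}\circ\mathrm{pr}_1$ gives
\[
(\mathrm{pr}_1)_\#\widetilde\pi_\Gamma^\star=(\widetilde\Phi_p^{-1})_\#\widetilde{\mathbb P}_{\mathfrak J}=(\widetilde\Phi_p^{-1}\circ\widetilde\Phi_p)_\#\widetilde{\mathbb P}_\Gamma=\widetilde{\mathbb P}_\Gamma .
\]
The second marginal is handled in the same way. Hence $\widetilde\pi_\Gamma^\star\in\Pi(\widetilde{\mathbb P}_\Gamma,\widetilde{\mathbb Q}_\Gamma)$.

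\emph{Step 3: concentration, virtual mass and descent.} Using injectivity,
\[
\widetilde\pi_\Gamma^\star\bigl(\iota(\Gamma)\times\iota(\Gamma)\bigr)=\pi_J^\star\bigl(\widetilde\Phi_p(\Gamma)\times\widetilde\Phi_p(\Gamma)\bigr)=1 .
\]
The vanishing on $E_{\mathrm{virt}}\times\widetilde\Gamma$ and $\widetilde\Gamma\times E_{\mathrm{virt}}$ can be obtained two ways. One is from the marginals together with \eqref{eq:no-virt-mass}. The other is from the previous display, reading $E_{\mathrm{virt}}$ as the open interiors of the virtual edges, as is implicit in \eqref{eq:no-virt-mass}; the closed virtual edges share their endpoints with $\Gamma$.

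For the descent, I set $\pi_\Gamma^\star:=(r\times r)_\#\widetilde\pi_\Gamma^\star$. Because $r|_{\iota(\Gamma)}$ is the identification $\iota^{-1}$ and the plan is concentrated on $\iota(\Gamma)^2$, this coincides with the restriction under the identification. By the computation following \eqref{eq:pushback-plan}, $\pi_\Gamma^\star\in\Pi(\mathbb P_\Gamma,\mathbb Q_\Gamma)$.

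\emph{Main obstacle.} The delicate point is Step 1: the injectivity of $\widetilde\Phi_p$ on the whole of $\widetilde\Gamma$, and with it the Borel measurability of the inverse. I will cite the known characterization that the Abel--Jacobi map is injective for bridgeless metric graphs, in the form used in \cite{yueqi_computing}, and then apply the compact-to-Hausdorff argument. If only injectivity on $\iota(\Gamma)$ were available, the same proof would still go through, because every measure involved lives on $\widetilde\Phi_p(\Gamma)$.
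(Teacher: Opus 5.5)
Your proposal is correct and follows the paper's route. The paper's proof simply invokes ``the bridgeless result'' applied to $\widetilde\Gamma$, i.e.\ injectivity of $\widetilde\Phi_p$ and pullback of $\pi_J^\star$ through $\widetilde\Phi_p^{-1}$, and then reads off the zero-virtual-mass and descent claims from the marginals being supported on $\iota(\Gamma)$; your Steps 1--3 spell out exactly this, and your reading of $E_{\mathrm{virt}}$ as the open interiors of the virtual edges is a correct precision, since otherwise vertex atoms at bridge endpoints would contradict \eqref{eq:no-virt-mass}.
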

The proof of Proposition~\ref{prop:OT-compat-bridge} is postponed to Appendix~\ref{app-proof-prop:OT-compat-bridge}.

\clearpage
\newpage

\section{Implementation Details}
\label{app:implementation-details}

\subsection{Metric Graph Embedding, Projection, and Inversion}
\label{app:graph-embedding-projection-inversion}

This subsection records the graph-level maps used in the experiments. Here the
embedding refers to the map
\[
    \Psi:\Gamma\to\mathcal M_\Psi
\]
that sends a point of the metric graph into the ambient computational space in
which the OT problem is solved. The neural feature map
\(\varphi_\Psi:\mathcal M_\Psi\to\mathbb R^n\), used to parameterize the
Schr\"odinger potential, is described at the end of the subsection.

\paragraph{Euclidean Embedding and Recovery.}
For the Euclidean geometry, \(\Psi=\iota\) and
\(\mathcal M_\Psi=\mathbb R^2\). The metric graph is embedded by its stored
vertex positions. If \(p_u,p_v\in\mathbb R^2\) are the positions of the
endpoints of an edge \(e=(u,v)\), and if a graph point lies at arclength
\(s\in[0,\ell(e)]\) from \(u\), then
\[
\iota(e,s)
=
(1-\alpha)p_u+\alpha p_v,
\qquad
\alpha=\frac{s}{\ell(e)}.
\]
Vertices are embedded as \(\iota(u)=p_u\). The Euclidean embedded cost is
\[
c_\iota(u,v)=\frac12\|u-v\|_2^2,
\qquad u,v\in\mathbb R^2.
\]
For the synthetic graphs, edges are straight chords, so this is the actual
drawing of the graph. For road graphs with curved OSM geometry, the
implementation uses the corresponding polyline-aware interpolator and
projector.

After Euclidean transport extraction, a sample is an ambient point
\(z\in\mathbb R^2\). The recovery map
\[
\rho_\iota^\Gamma
=
\iota^{-1}\circ\operatorname{proj}_{\iota(\Gamma)}
:
\mathbb R^2\to\Gamma
\]
is implemented by nearest-segment projection. Explicitly,
\[
(e^\star,\alpha^\star)
\in
\arg\min_{e=(u,v),\,\alpha\in[0,1]}
\left\|
z-\bigl((1-\alpha)p_u+\alpha p_v\bigr)
\right\|_2^2 .
\]
For a fixed edge \(e=(u,v)\), the minimizing parameter is
\[
\alpha_e
=
\operatorname{clip}_{[0,1]}
\left(
\frac{\langle z-p_u,p_v-p_u\rangle}{\|p_v-p_u\|_2^2}
\right).
\]
The implementation evaluates this quantity over all edges and selects the edge
with smallest squared residual. The recovered graph point is the endpoint
\(u\) if \(\alpha^\star\) is numerically \(0\), the endpoint \(v\) if
\(\alpha^\star\) is numerically \(1\), and otherwise the edge-interior point
\((e^\star,s^\star)\) with
\[
s^\star=\alpha^\star \ell(e^\star).
\]
Thus Euclidean projection and Euclidean inversion are combined in the recovery
map \(\rho_\iota^\Gamma\).

\paragraph{Tropical Abel--Jacobi Embedding.}
For the tropical geometry, \(\Psi=\Phi_p\) and
\(\mathcal M_\Psi=\mathbb T_\Lambda\), the tropical Jacobian torus. We choose a
spanning tree \(T\), a base vertex \(p\), and the fundamental-cycle basis
determined by the non-tree edges. Let \(g\) be the genus of \(\Gamma\). The
implementation first computes vertex Abel--Jacobi coordinates in a chosen lift
of the Jacobian. In the notation of the code, these raw coordinates are
\[
A
=
C_T L_T Y_T^\top
\in\mathbb R^{g\times |V|},
\]
where \(C_T\) is the reduced cycle-edge incidence matrix on tree edges,
\(L_T\) is the diagonal matrix of tree-edge lengths, and \(Y_T\) is the
basepoint-to-vertex path-edge incidence matrix in the spanning tree. Thus
\(A_v\in\mathbb R^g\) is the raw Abel--Jacobi coordinate of vertex \(v\)
relative to \(p\).

The period matrix is
\[
Q
=
C_T L_T C_T^\top + L_G,
\]
where \(L_G\) is the diagonal matrix of the non-tree edge lengths. In the
notation of the main text, this matrix gives the torus metric matrix
\(\Xi\). Raw coordinates have period lattice \(Q\mathbb Z^g\). In the
implementation, we work in fractional unit-lattice coordinates
\[
\xi = Q^{-1}A \pmod{\mathbb Z^g},
\qquad
\xi\in\mathbb R^g/\mathbb Z^g .
\]
In these coordinates, the tropical torus distance is
\[
d_{\mathbb T_\Lambda,\Xi}(\xi,\eta)^2
=
\min_{n\in\mathbb Z^g}
(\xi-\eta-n)^\top Q(\xi-\eta-n).
\]

For a point on an edge, the Abel--Jacobi coordinate is obtained by affine
interpolation in a lifted edge model. Each oriented edge \(e\) has a lifted
start \(U_e\in\mathbb R^g\) and lifted direction \(d_e\in\mathbb R^g\) in
fractional coordinates. For tree edges,
\[
d_e
=
\xi_{\operatorname{end}(e)}
-
\xi_{\operatorname{start}(e)}.
\]
For the non-tree edge corresponding to cycle \(j\),
\[
d_e
=
\ell(e)\,Q^{-1}e_j .
\]
Hence a graph point at normalized edge coordinate
\(\alpha=s/\ell(e)\) maps to
\[
\Phi_p(e,s)
=
\bigl[U_e+\alpha d_e\bigr]
\in\mathbb R^g/\mathbb Z^g .
\]

\paragraph{Tropical Projection and Inversion.}
After tropical transport extraction, a sample is a torus point
\(\xi\in\mathbb R^g/\mathbb Z^g\). The recovery map
\[
\rho_{\Phi_p}^\Gamma
=
\Phi_p^{-1}\circ\operatorname{proj}_{\Phi_p(\Gamma)}
:
\mathbb T_\Lambda\to\Gamma
\]
is implemented by solving a lifted closest-segment problem:
\[
(e^\star,\alpha^\star,n^\star)
\in
\arg\min_{e,\,\alpha\in[0,1],\,n\in\mathbb Z^g}
\left\|
\xi-U_e-n-\alpha d_e
\right\|_Q^2,
\qquad
\|r\|_Q^2:=r^\top Qr .
\]
For fixed \((e,n)\), the optimal edge parameter has a closed form,
\[
\alpha_{e,n}
=
\operatorname{clip}_{[0,1]}
\left(
\frac{(\xi-U_e-n)^\top Qd_e}{d_e^\top Qd_e}
\right),
\]
so the only non-trivial degree of freedom is the integer shift \(n\).

\emph{Certified-exact bounded-box enumeration.}  The default
implementation enumerates \(n\) over a bounded integer box
\(n\in[-r,r]^g\) and, for each \((e,n)\), evaluates the closed-form
clipped \(\alpha_{e,n}\) and the resulting residual; the global
\((e^\star,n^\star)\) is the argmin over edges and shifts. This
enumeration is \emph{certified exact} once \(r\) reaches the
spectral certified radius
\[
   r_{\mathrm{cert}}(Q)
   = \left\lceil
       \tfrac12\sqrt{g\,\lambda_{\max}(Q)/\lambda_{\min}(Q)}-\tfrac12
     \right\rceil ,
\]
because any candidate \(n'\) outside the box satisfies
\(\|\xi-U_e-n'-\alpha d_e\|_Q\ge \|n'\|_{Q^{-1}}^{-1}/2 > r_Q^\star\),
where \(r_Q^\star\) bounds the interior-box residual.

\emph{High-genus fallback.}  When a benchmark's effective genus exceeds
the cap of 7, the certified box becomes
prohibitively large (\(|n|=O((2r+1)^g)\)) and the implementation
auto-switches to a Babai nearest-plane solver on an LLL-reduced basis.

\emph{Decoder output.}  The same closest-segment problem implements
both projection onto \(\Phi_p(\Gamma)\) and inversion back to the
metric graph. Given \(\xi\), the decoder returns the edge
\(e^\star\), edge coordinate \(\alpha^\star\), lattice shift
\(n^\star\), and residual
\[
r^\star
=
\left\|
\xi-U_{e^\star}-n^\star-\alpha^\star d_{e^\star}
\right\|_Q .
\]
If \(r^\star\) is below numerical tolerance, \(\xi\) is treated as
lying on the Abel--Jacobi image of the graph; otherwise the same
minimizer gives the nearest point on \(\Phi_p(\Gamma)\). The decoded
graph point is
\[
(e^\star,s^\star),
\qquad
s^\star=\alpha^\star\ell(e^\star),
\]
with endpoint snapping when \(\alpha^\star\) is numerically \(0\) or
\(1\).
\paragraph{Neural Feature Maps.}
The preceding paragraphs describe the graph-to-ambient maps \(\Psi\) and the
corresponding recovery maps \(\rho_\Psi^\Gamma\). The neural potential is
parameterized using a feature map
\[
\varphi_\Psi:\mathcal M_\Psi\to\mathbb R^n .
\]
For the tropical Abel--Jacobi geometry, we use a Gromov distance feature map on
the Jacobian torus: the features of a point are its geodesic distances to a
fixed set of landmark points, typically embedded vertices, with source samples
used to fill any remaining landmark slots. For the Euclidean geometry, we
consider both Riemannian log features and Gromov distance features. In
\(\mathbb R^2\), the log feature at a reference point \(\bar x\) is simply
\(x\mapsto x-\bar x\).
\subsection{Implementation Details for Metric Graph Geometry Benchmarks}
\label{app:q1-implementation}

This subsection gives the implementation details for the synthetic
metric-graph geometry experiment in Section~\ref{sec:q1}. Since the goal of this experiment
is to evaluate \emph{graph-supported sample generation} and the induced
\emph{target-measure recovery} on metric graphs, every method is evaluated
through the graph-supported measure \(\widehat\nu\) induced by its transported
or generated samples. The reference object is the true continuous target law
\(\nu=\mathbb Q_\Gamma\), approximated by a deterministic graph-supported
quadrature rule shared across all methods and seeds.

\paragraph{Experiment Protocol.}
\label{app:q1-protocol}

Each benchmark consists of a metric graph \(\Gamma=(V,E,\ell)\), a source
probability measure \(\mu\), and a target probability measure \(\nu\), both
supported on \(\Gamma\). The target measure is constructed analytically by
moving the source mode to a different part of the graph, so that evaluation
can be performed against a deterministic quadrature approximation of the true
target law rather than against a finite Monte Carlo sample.

For each benchmark and seed, we draw three independent samples:
\[
    X_{\mathrm{train}} \sim \mu^{\otimes n_{\mathrm{train}}},
    \qquad
    Y_{\mathrm{train}} \sim \nu^{\otimes n_{\mathrm{train}}},
    \qquad
    X_{\mathrm{eval}} \sim \mu^{\otimes n_{\mathrm{test}}}.
\]
The source-aware methods fit a transport rule using
\((X_{\mathrm{train}},Y_{\mathrm{train}})\), and are then evaluated by applying
the fitted rule to the held-out source batch \(X_{\mathrm{eval}}\). This
train/eval split prevents source-aware methods from being evaluated on the
same source samples used to fit their transport plan. 

The full experimental configuration uses
\[
    n_{\mathrm{train}}=2048,\qquad
    n_{\mathrm{test}}=512,\qquad
    n_{\mathrm{ref}}=10{,}000,
\]
with five seeds \(0,1,2,3,4\). The smoke configuration uses
\(n_{\mathrm{train}}=128\), \(n_{\mathrm{test}}=64\), \(n_{\mathrm{ref}}=512\),
and a single seed; it additionally shrinks the neural-training schedule
(\(n_{\mathrm{steps}}=30\), batch size \(64\)) and the evaluation
discretisation (\(8\) bins per edge, \(100\) Sinkhorn iterations) so that the
end-to-end pipeline runs in seconds.  Unless otherwise stated, the full
configuration uses \(128\) bins per edge, and graph Wasserstein metrics use
entropic Sinkhorn with regularisation \(10^{-3}\) and \(400\) iterations.

\paragraph{Graph-Supported Probability Measures.}
\label{app:q1-graph-measures}

Each synthetic source or target law is represented as a probability measure
supported on a compact metric graph \(\Gamma=(V,E,\ell)\). Every edge
\(e\in E\) is identified with an interval \([0,\ell_e]\), with canonical
arc-length parametrisation
\[
    \iota:[0,\ell_e]\longrightarrow \Gamma .
\]
A graph point is represented either by a vertex \(v\in V\), or by an
edge--arc-length pair \((e,s)\) with \(s\in[0,\ell_e]\).

The measures used in the first experiment are mixtures of edge-continuous components and,
optionally, vertex atoms:
\[
    P_\Gamma
    =
    \sum_{r=1}^{R_{\mathrm{edge}}}
    \alpha_r\,(\iota_{e_r})_{\#}\rho_r
    +
    \sum_{u=1}^{R_{\mathrm{vert}}}
    \beta_u\,\delta_{v_u},
    \qquad
    \alpha_r,\beta_u\ge 0,
    \qquad
    \sum_r\alpha_r+\sum_u\beta_u=1 .
\]
Here \(\rho_r\) is a one-dimensional density on the edge interval
\([0,\ell_{e_r}]\). In the benchmarks, the source and target components
are typically truncated Gaussian edge densities,
\[
    \rho_{e,\mu,\sigma}(s)
    =
    \frac{
        \exp\!\left(-\frac{(s-\mu)^2}{2\sigma^2}\right)
        \mathbbm {1}_{\{0\le s\le \ell_e\}}
    }{
        \int_0^{\ell_e}
        \exp\!\left(-\frac{(t-\mu)^2}{2\sigma^2}\right)\,dt
    }.
\]
The implementation also supports uniform edge densities, triangular edge
densities, and vertex atoms. The target law \(\nu=\mathbb Q_\Gamma\) is
obtained from the source law by relocating the edge-supported mode to a
different graph region, for example across theta branches, wheel spokes,
grid corners, or road-network hotspot edges.

Graph points are represented in code either as a vertex \(u\in V\), or as
\(((u,v),s)\), meaning the point at arc-length \(s\in[0,\ell_{uv}]\) from
\(u\) along edge \((u,v)\). Reverse orientations are handled consistently:
a component stored on \((v,u)\) is evaluated on \((u,v)\) by the coordinate
transformation \(s\mapsto \ell_{uv}-s\).

\paragraph{Synthetic Benchmarks.}
\label{app:q1-benchmarks}

The default experiment sweep uses four graph families. All edge weights are metric
lengths. For synthetic graphs embedded in \(\mathbb R^2\), edge weights are
the Euclidean lengths of the corresponding line segments.

\emph{Theta graph.}
The theta benchmark uses \(\Theta(k=3)\) with two interior vertices per
branch. The source is a truncated Gaussian on an internal edge of branch
\(0\), and the target is the corresponding truncated Gaussian on an internal
edge of branch \(2\). This benchmark tests transport across homologically
distinct branches.

\emph{Wheel graph.}
The wheel benchmark uses a wheel graph with six vertices, one hub and five
rim vertices. The source is a truncated Gaussian on one spoke, and the
target is a truncated Gaussian on a non-adjacent spoke. This graph has
multiple cycles and tests whether methods respect the radial/cyclic
structure.

\emph{Grid graph.}
The grid benchmark uses a \(3\times 3\) rectangular grid. The source is
placed on an edge near the bottom-left corner and the target on an edge near
the top-right corner. This benchmark tests transport across a graph with
several homotopy-equivalent routes.

\emph{Road graph.}
The road benchmark is a small random planar graph generated from a Delaunay
triangulation with \(10\) nodes, then pruned while preserving connectivity.
The source is placed on an edge near one corner of the embedding and the
target on an edge near the diagonal corner. After bridge augmentation the
effective Jacobian-torus dimension can reach \(g\approx 8\); the
implementation handles this via a high-genus fallback rule described
below, so that the experimental protocol works on Road without manual configuration.

\paragraph{Deterministic Reference Measure.}
\label{app:q1-reference}

For evaluation, we do not use a Monte Carlo draw from the target as the
reference. Instead, for each benchmark we build a deterministic quadrature
approximation of the analytical target law \(\nu\). For every target edge
component with normalised mass \(\alpha_r\), we allocate a number of
midpoint quadrature nodes proportional to \(\alpha_r\), evaluate the
one-dimensional density on that grid, and renormalise the quadrature
weights on that edge. The reference measure has the form
\[
    \nu_{\mathrm{ref}}
    =
    \sum_{k=1}^{K_{\mathrm{ref}}}
    w_k \delta_{z_k},
    \qquad
    \sum_k w_k = 1.
\]
Vertex atoms, when present, are included exactly with their normalised
masses. This reference is built once per benchmark and reused for every
seed and method, so metric variation reflects method and training
variability rather than reference-sampling noise.

\paragraph{Methods.}
\label{app:q1-methods}

The main comparison includes three source-aware baselines, the
Neural Tropical method and two Neural Euclidean baselines (Log and Gromov embedding, respectively).

\emph{Node-interpolation pushforward.}
This baseline first snaps source and target samples to their nearest
vertices and solves vertex-level entropic OT with intrinsic graph cost
\[
    c_\Gamma(u,v) = \frac12 d_\Gamma(u,v)^2.
\]
The resulting vertex-level barycentric images are extended to edge-interior
source points by linear interpolation along the source edge. A second
optional ambient Euclidean Sinkhorn assignment maps the interpolated
positions toward the full target sample support before projecting back to
the graph.

\emph{Ambient Euclidean pushforward.}
This baseline embeds source and target graph points in the ambient
\(\mathbb R^2\) plane, solves entropic OT with squared Euclidean cost
\[
    c_{\mathbb R^2}(x,y)=\frac12\|x-y\|^2,
\]
forms barycentric images, and projects the resulting ambient points back to
the nearest point on the graph.

\emph{Neural tropical method.}
This method maps graph points into the tropical Jacobian torus via
the Abel--Jacobi map
\[
    \Phi:\Gamma\longrightarrow \operatorname{Jac}(\Gamma).
\]
After choosing a cycle basis, the Jacobian is represented as a flat torus
\[
    \operatorname{Jac}(\Gamma)\cong \mathbb R^g/\mathbb Z^g
\]
equipped with the tropical metric matrix \(Q\). We train an entropic
semi-dual potential on the torus using the cost
\[
    c_{\operatorname{Jac}}(x,y)
    =
    \frac12
    \min_{n\in\mathbb Z^g}
    (x-y-n)^\top Q (x-y-n).
\]
Given a held-out source point \(x\), the learned potential defines a Gibbs
conditional distribution \(P_x(\,\cdot\,)\) over the lifted target support.
The experiment uses a \emph{heat-smoothed sampling} decoder: rather than reading off
the conditional argmax, we draw a torus point from
\[
    q_t(z) \;=\; \sum_{j} P_x(j)\,p_t(y_j,z),
\]
where \(p_t\) is the torus heat kernel and the heat time scales linearly
with the entropic temperature, \(t = \alpha\,\varepsilon\) with the small
constant \(\alpha = 3\!\times\!10^{-3}\). The Varadhan-matching choice
\(\alpha = \tfrac12\) over-smooths catastrophically on the small synthetic
supports used in the first experiment because the Brownian standard deviation
\(\sqrt{2t} = \sqrt{\alpha\,\varepsilon}\) becomes comparable to the
inter-atom spacing on the lifted target support and the sample crosses to
neighbouring edges; the small-\(\alpha\) regime keeps the smoothing within
the resident edge so that final-projection accuracy is preserved.

The torus draw is then projected back to the graph image by solving the
inverse Abel--Jacobi projection problem discussed in Appendix~\ref{app:graph-embedding-projection-inversion}.

\emph{Neural Euclidean methods.}
The Euclidean ablation uses the same neural entropic OT training loop,
learning rate, batch size, adaptive regularisation rule, and heat-smoothed
sampling decoder as the neural tropical method. The only change is the
ambient geometry: source and target samples are represented in
\(\mathbb R^2\), the cost is squared Euclidean distance, and the final
output is projected back to the graph in the ambient plane. This isolates
the effect of the Abel--Jacobi / tropical-Jacobian inductive bias.

\paragraph{Adaptive Entropic Regularisation.}
\label{app:q1-adaptive-epsilon}

All source-aware methods use the same adaptive entropic regularisation
convention. For each method, we compute a cost matrix on a random
subsample of at most \(256\) source points and \(256\) target points,
using that method's own geometry and cost. The entropic temperature is
then set to
\[
    \varepsilon
    =
    \lambda \cdot \operatorname{median}(C),
    \qquad
    \lambda = 0.01.
\]
This prevents a fixed absolute value of \(\varepsilon\) from favouring one
geometry simply because its cost scale is smaller. If adaptive
regularisation is disabled, the implementation falls back to the absolute
temperatures specified in the configuration.

\paragraph{Evaluation Metrics.}
\label{app:q1-metrics}

We report four metrics comparing a predicted graph-supported measure
\(\hat\nu\) with the deterministic reference \(\nu_{\mathrm{ref}}\).

\emph{Graph Wasserstein distances.}
The primary native-form metrics are entropic approximations of intrinsic
graph Wasserstein distances:
\[
    W_1^\Gamma(\hat\nu,\nu_{\mathrm{ref}})
    \quad\text{and}\quad
    W_2^\Gamma(\hat\nu,\nu_{\mathrm{ref}}).
\]
Both use the exact graph geodesic distance \(d_\Gamma\) between graph
points. For edge-interior points, the distance is computed as the minimum
over the four endpoint-routed paths, with the direct same-edge arc-length
included as an additional candidate. Thus alternate routes through the
graph are correctly accounted for even when two points lie on the same
edge.

For \(W_1\), the Sinkhorn cost matrix is \(C_{ij}=d_\Gamma(x_i,y_j)\). For
\(W_2\), the cost matrix is \(C_{ij}=d_\Gamma(x_i,y_j)^2\), and the square
root of the optimal transport cost is returned.

\emph{Edgewise density \(L^1\).}
To compare density-like behaviour, both measures are binned onto a common
edge-bin grid. Let \(\rho_{\hat\nu}\) and \(\rho_{\mathrm{ref}}\) denote
the piecewise-constant densities induced on this grid. We compute
\[
    \|\rho_{\hat\nu}-\rho_{\mathrm{ref}}\|_{L^1(\Gamma)}
    =
    \sum_{e\in E}
    \int_e
    |\rho_{\hat\nu}(s)-\rho_{\mathrm{ref}}(s)|\,ds.
\]
Vertex mass is split evenly across incident endpoint bins.

\emph{Edgewise CDF \(L^1\).}
For each edge \(e\), we compute the cumulative mass functions induced by
the common edge bins and integrate the absolute CDF error along the edge:
\[
    \sum_{e\in E}
    \int_0^{\ell_e}
    |F_{\hat\nu,e}(s)-F_{\mathrm{ref},e}(s)|\,ds.
\]
This metric is less sensitive than density \(L^1\) to small within-edge
shifts while still detecting whether mass is placed on the correct edge
and in the correct region.

\paragraph{Tropical Abel--Jacobi Implementation.}
\label{app:q1-aj-implementation}

The Abel--Jacobi implementation precomputes the period matrix \(Q\), its
inverse, vertex Abel--Jacobi coordinates, and lifted edge starts /
directions. For tree edges, the lifted edge direction is the difference
of the lifted endpoint coordinates. For non-tree edges, the lifted
direction is determined by the corresponding cycle-basis coordinate and
the metric edge length. The implementation orients each non-tree edge so
that the forward map and inverse projection are consistent.

The forward map is implemented by locating each ambient graph point on its
supporting edge, interpolating the Abel--Jacobi coordinate along that edge,
and converting to fractional torus coordinates. The inverse / projection
map solves the edge / lattice least-squares problem described above.

Graphs with bridges can be augmented by adding a virtual parallel
half-edge path through a midpoint, creating a small cycle for each
bridge. This gives bridge-supported mass a representation in the
Jacobian torus. After tropical projection on the augmented graph,
predicted positions are snapped back to the original graph for
evaluation.

\paragraph{Numerical and Implementation Notes.}
\label{app:q1-numerical-notes}

All Sinkhorn solves use the log-domain implementation from OTT-JAX for
numerical stability. Neural methods use JAX/JIT-compiled kernels for the
Abel--Jacobi forward map, torus CVP computations, and inverse projection.
Evaluation and projection are chunked to control GPU memory: the default roster uses an eval-time transport chunk of \(64\), a one-shot
\(\psi(y)\) precomputation chunk of \(256\), and an Abel--Jacobi
projection chunk of \(16\); these chunks affect only memory scheduling,
not the mathematical estimator. In the default roster, the neural
methods use
\[
    \texttt{batch\_size}=256,\qquad
    \texttt{n\_steps}=3000,\qquad
    \texttt{learning\_rate}=10^{-2},
\]
with cosine learning-rate decay enabled.  Checkpointing can be enabled to
save one neural potential per benchmark and seed.

\subsection{Implementation Details for Scaling Analysis}
\label{app:impl-q2}

This experiment reuses the synthetic metric-graph geometry benchmarks and method
implementations from the first synthetic experiment, but changes the experimental question from
``which method recovers the target measure best?'' to ``how do accuracy,
runtime, and memory scale as the number of training samples?''  The code therefore separates \emph{accuracy
evaluation} from \emph{scaling measurement}: accuracy runs compute
graph-level discrepancies against a deterministic reference target, while
scaling runs measure only the cost of fitting and applying the method,
deliberately excluding the expensive reference-evaluation Sinkhorn solve.

\paragraph{Benchmarks and Sample-Size Sweep.}
The main sample-size sweep uses the same four benchmark instances as the
the first experiment sweep (theta graph, grid graph, wheel graph and road graph)
with first Betti numbers \(g=2,\,4,\,5,\,\ge 7\) respectively (Road graph's
effective genus depends on bridge augmentation).  The training sizes are
\[
n_{\mathrm{train}}\in\{128,256,512,1024\},
\]
and each cell is repeated over seeds
\[
\{0,1,2\}.
\]
For each seed, we draw independent source-training, target-training, and
held-out source-evaluation samples.  Methods are fit on
\((\{x_i\}_{i=1}^{n_{\mathrm{train}}},\{y_j\}_{j=1}^{n_{\mathrm{train}}})\)
and then evaluated by transporting a held-out source cloud of size
\(n_{\mathrm{test}}=512\), inherited from the first experiment configuration.  

\paragraph{Methods.}
The roster in this experiment contains the following source-aware methods: Ambient pushforward, Node Interpolation, NeuralEuclidean (with Log embedding), NeuralEuclidean(with Gromov embedding), NeuralTropical.
The first two are deterministic pushforward baselines.   \textsc{NeuralEuclidean}
appears as two ablation variants that share architecture and training
schedule but differ in the input embedding: \textsc{NeuralEuclidean(Log)}
uses a logarithmic embedding in \(\mathbb R^2\) (centering at a base point)
with LayerNorm disabled, while \textsc{NeuralEuclidean(Gromov)} uses a
graph-distance landmark embedding with LayerNorm enabled.  Both are
ambient-Euclidean ablations of the Jacobian-torus method
\textsc{NeuralTropical}. 

\paragraph{Adaptive Regularisation and Chunking.}
For methods using entropic OT, the regularisation scale is selected as
\[
\varepsilon = 0.01\,\mathrm{median}(C),
\]
where the median is computed on a fixed \(256\times256\) random subsample
of the relevant cost matrix for that method.  This keeps the temperature
comparable across graph families and prevents a full
\(n_{\mathrm{train}}^2\) median-cost matrix from being materialised only
to tune \(\varepsilon\).

The neural methods use chunked evaluation to keep peak memory bounded.  In
the default  configuration, the transported source-evaluation batch
is split into chunks of size \(64\), the one-shot \(\psi(y)\) precomputation
is split into chunks of size \(256\), and Abel--Jacobi projection is split
into chunks of size \(16\).  These chunks affect only memory scheduling;
they do not change the mathematical estimator.

\paragraph{Tropical CVP and Projection.}
The tropical methods use the same Abel--Jacobi data structures as Q1.  The
default CVP method is exact whenever the certified lattice-search radius
is within the configured budget.  When a graph's effective genus
exceeds the cap, the implementation falls back to a scalable Babai
nearest-plane solver on an LLL-reduced basis; this avoids silently
treating a fixed small search box as exact.  Approximate CVP variants
based on LLL or BKZ can also be selected through the configuration, but
the default Q2 sweep uses the same production-safe settings as Q1.  In
practice, this means Theta (\(g=2\)), Grid (\(g=4\)), and Wheel (\(g=5\))
run with exact CVP, while Road (effective \(g\ge 7\) after bridge
augmentation) auto-falls back to Babai nearest plane solver.

\paragraph{Accuracy Sweep.}
The accuracy runner computes graph-level error as a function of
\(n_{\mathrm{train}}\).  For each benchmark, a deterministic edgewise
quadrature reference is built once from the analytical target measure
\(\mathbb Q_\Gamma\) and shared across methods and seeds:
\[
N_{\mathrm{ref}}=10{,}000
\]
quadrature nodes.  The vertex-distance matrix \(D_{vv}\) is precomputed
once per benchmark and reused by every \((\mathrm{method},\mathrm{seed})\)
cell.  Each predicted graph-supported measure \(\widehat\nu\) is compared
to this reference using the same metrics as Q1:
\[
W_1(\widehat\nu,\mathbb Q_\Gamma),\qquad
W_2(\widehat\nu,\mathbb Q_\Gamma),\qquad
\|\widehat\rho-\rho\|_{L^1(\Gamma)},
\]
and the edgewise CDF \(L^1\) diagnostic.  The reference is
seed-independent, so all \((\mathrm{method},\mathrm{seed})\) cells are
scored against the same target object and metric variance is attributable
to method/training noise alone.

\paragraph{Scaling Sweep and Memory Measurement.}
In the scaling sweep, we measure method fit time and peak GPU memory without calling the reference-evaluation routine. This is intentional: evaluating $W_1$ or $W_2$ against the $N_{\mathrm{ref}} = 10{,}000$ quadrature reference would materialise an $n_{\mathrm{train}} \times N_{\mathrm{ref}}$ cost matrix on the GPU, causing the memory measurement to reflect evaluation overhead rather than the cost of the method itself.

Peak memory is measured in isolated worker processes. JAX exposes its device-memory statistics as a process-monotone counter, with no in-process reset of the recorded peak: once a process has allocated memory at any point in its lifetime, the reported peak stays at that high-water mark even after the work that triggered it has been freed. Running multiple cells in the same process therefore contaminates later peaks with earlier transient allocations. To avoid this, the scaling orchestrator launches a fresh subprocess for each combination of benchmark, method, and training size, so every measurement starts from a peak of zero. Each subprocess runs all seeds for its cell and records a single cell-level peak-memory value. 

In the plots, peak memory is deduplicated to one row per cell before aggregation, so the seed rows that share a single subprocess are not treated as independent memory measurements. Before stopping the fit-time timer, the worker also forces JAX to finish all asynchronous device work — it walks the pytree of returned arrays and blocks on each leaf — so that the recorded runtime reflects actual completion rather than the moment the dispatch queue was filled. The reported scaling time is therefore an end-to-end method-call time for the worker process, and includes JIT compilation on the first seed plus warm-cache execution on later seeds.

\subsection{Implementation Details for Real-World Graph-Supported Generation on Manhattan Uber Pickups}
\label{app:impl-q3-uber}

This subsection gives implementation details for the real-data experiment on
Uber NYC pickups.  Unlike in the other two experiments, where the target law is synthetic and
analytically specified, in this experiment uses a large empirical target measure supported on
snapped pickup locations on a Manhattan road graph.  The purpose of the
experiment is to test whether the proposed neural graph-transport pipeline can
scale to a million-sample empirical target distribution while producing
realistic graph-supported target measures.

\paragraph{Data Source and Preprocessing.}
We use the 2014 Uber NYC pickup records from April through September.  Each
pickup is represented by a timestamp and a latitude--longitude coordinate.  We
restrict the spatial domain to Manhattan below 110th Street.  The retained
region is represented as a polygon in longitude--latitude coordinates, and
pickups outside this polygon are discarded before graph snapping.

The road graph is obtained from OpenStreetMap using the drivable road network inside the same Manhattan region.  The graph
is projected to a metric coordinate system in metres, simplified to an
undirected metric graph, and pruned to its largest connected component.
Parallel road geometries are deduplicated by retaining the shortest projected
polyline between each undirected node pair, and degree-one leaves are
iteratively removed.  Each retained road edge stores its projected polyline
geometry and metric length in metres.

Pickups are snapped to the nearest retained road polyline using a spatial index.
For a pickup assigned to edge \(e=(u,v)\), we record both the edge index and
the normalised arc-length coordinate
\[
    \alpha \in [0,1],
\]
where \(\alpha=0\) corresponds to \(u\) and \(\alpha=1\) corresponds to \(v\)
in the stored edge orientation.  Pickups with snap distance larger than
\(50\) metres are discarded.  In the main data
split, this preprocessing yields
\(2{,}267{,}487\) training pickups, \(361{,}466\) validation pickups, and
\(690{,}939\) test pickups after polygon clipping and snap filtering, for
\(3{,}319{,}892\) usable pickups over \(183\) calendar days.

\paragraph{Bridge Augmentation and Graph Representation.}
The Manhattan graph is bridge-augmented during preprocessing.  This produces
the metric graph used by all downstream methods, and the methods considered are configured
not to apply a second internal bridge augmentation.  The augmented graph used
in the main run has approximately \(3.0\mathrm{k}\) nodes, \(5.4\mathrm{k}\)
edges, and first Betti number \(g=2416\).  This genus is far beyond the range
where exact closest-vector enumeration on the Jacobian torus is feasible.

Although the metric graph contains virtual bridge-augmentation edges, density
visualisations and edgewise histogram metrics are reported on the retained
real road polylines.  Points landing on virtual edges are collapsed back to
their corresponding real road edge before edgewise binning.

\paragraph{Train/Validation/Test Split.}
The main experiment split is a random day-level split.  Calendar days, not individual
pickups, are partitioned into train, validation, and test sets with fractions
\(70\%/10\%/20\%\).  Thus all pickups from the same calendar day belong to a
single split, avoiding leakage of same-day demand patterns between training
and testing.

\paragraph{Source and Target Measures.}
The empirical target measure \(\nu\) is supported on snapped Uber pickup
graph points from the training split.  The source measure \(\mu\) is a
synthetic length-proportional uniform measure on the real road edges.  For
the million-run experiment, we use
\[
    N_{\mathrm{target}} = 10^6,
    \qquad
    N_{\mathrm{source}} = 10^6.
\]
The target support is subsampled without replacement from the training
pickups, and the source support is sampled uniformly along real road edges.

\paragraph{Decoder Support and Generated Measure.}
Training support size and decoder support size are kept separate.  This is
important at million-sample scale: using all \(10^6\) target points as the
post-training Gibbs support would make the conditional transport matrix
prohibitively large.  After training, the learned entropic transport kernel
is evaluated on a bounded decoder support
\[
    K_{\mathrm{dec}} = 131{,}072
\]
sampled from the target training support.  The generated measure is formed
from
\[
    M = 131{,}072
\]
held-out source query points sampled uniformly along the graph.  For a source
query \(x_i\) and decoder atom \(y_j\), the learned conditional Gibbs weights
have the form
\[
    P_\theta(y_j\mid x_i)
    \propto
    \exp\!\left(
        \frac{\psi_\theta(y_j)-c(x_i,y_j)}{\varepsilon}
    \right),
    \qquad
    y_j \in Y_{\mathrm{dec}}.
\]
The final predicted measure \(\widehat\nu_\theta\) is the graph-supported
empirical measure obtained by decoding the learned conditional distribution
at the \(M\) source query points.

\paragraph{Method Roster at Million Scale.}
The million-scale roster is Neural Euclidean (with Log Embedding),  Neural Euclidean(with Gromov Embedding), Ambient pushforward and Node Interpolation baselines
plus two reference rows: \emph{Eval.\ noise floor} (test-test) and
\emph{uniform-road} (length-proportional uniform measure).  

The two source-aware non-neural baselines (Ambient pushforward,
Node Interpolation) cannot be run at the full \(10^6\)-point training
support without OOM, because their dense \(N\times N\) ambient Sinkhorn cost
matrix would require tens of gigabytes.  We therefore subsample the
\((\mu,\nu,\mu_{\mathrm{eval}})\) tuples down to a baseline support cap
\[
    N_{\mathrm{baseline}} = 16{,}384
\]
matching \(N_{\mathrm{W}}\) below; neural methods continue to see the full
\(10^6\)-point training supports.

\paragraph{Neural Models.}
The neural Euclidean baseline represents graph points by their projected
polyline coordinates in \(\mathbb R^2\).  The input position function is
polyline-aware: snapped graph points are mapped to the actual point on the
retained road polyline rather than to a chord interpolation between graph
vertices.  The Euclidean model therefore uses the correct Manhattan road
geometry at both input and output projection stages.

Unless otherwise stated, neural potentials are multilayer perceptrons with
SiLU activations and scalar output, hidden dimensions \((512,512,512,512)\).
The two Euclidean variants share the same architecture and training schedule
but differ in their input embedding:
Neural Euclidean (Log) uses a logarithmic embedding in \(\mathbb R^2\)
(centering at a base point) with LayerNorm disabled, matching the controlled ablation protocol in the other experiments; Neural Euclidean (Gromov)  uses a
graph-distance landmark embedding with LayerNorm enabled, which is the
preset that wins on the Manhattan benchmark.

\paragraph{Regularisation and Optimisation.}
The entropic OT temperature is selected adaptively for each method unless an
explicit value is provided.  Specifically, if \(\varepsilon\) is not fixed,
we set
\[
    \varepsilon
    =
    \lambda \cdot \operatorname{median}(C),
\]
where \(C\) is a method-specific training cost subsample.  We use distinct
multipliers for the neural training Sinkhorn and the source-aware baseline
training Sinkhorn:
\[
    \lambda_{\mathrm{neural}}=10^{-2},
    \qquad
    \lambda_{\mathrm{baseline}}=0.30.
\]
The baseline value \(\lambda_{\mathrm{baseline}}=0.30\) is justified by an
\(\varepsilon\)-sweep on the source-aware baselines at million scale.  This mirrors
the adaptive temperature scheme used in the synthetic experiments and avoids
using a fixed temperature across graphs with different metric scales.

For million-scale runs, the neural optimizer uses minibatches of size
\(B=1024\).  The production preset trains for \(T=3000\) steps with cosine
learning-rate decay from initial rate \(3 \cdot 10^{-3}\), corresponding to
\(B \cdot T = 3{,}072{,}000\) target draws \(\approx 3.07\) full passes over
the \(10^6\)-point empirical target support.  

\paragraph{Evaluation Reference.}
The held-out reference measure is built from snapped test pickups.  The main
million-scale run uses
\[
    N_{\mathrm{eval}} = 131{,}072
\]
test pickups as the reference support.  Since the dense Sinkhorn cost matrix
for \(131{,}072\times131{,}072\) atoms would require tens of gigabytes of
memory, Wasserstein metrics are computed on a fixed subsample of size
\[
    N_{\mathrm{W}} = 16{,}384.
\]

\paragraph{Metrics.}
Q3 reports intrinsic graph Wasserstein metrics, edge-density metrics, and
optional spatial coverage diagnostics.  The Wasserstein metrics are computed
using the same graph-supported entropic evaluation pipeline as in the first experiment:
\[
    W_1(\widehat\nu_\theta,\nu_{\mathrm{test}}),
    \qquad
    W_2(\widehat\nu_\theta,\nu_{\mathrm{test}}).
\]
The evaluation Sinkhorn regularisation is adaptive,
\[
    \varepsilon_{\mathrm{eval}}
    =
    \lambda_{\mathrm{eval}}
    \cdot
    \operatorname{median}(D_{\mathrm{eval}}),
    \qquad
    \lambda_{\mathrm{eval}}=10^{-3},
\]
unless explicitly overridden.

Density-style metrics are computed by binning mass along the real road
polylines.  Edgewise density \(L^1\) error compares the generated and
held-out test densities on a common edgewise binning, and edgewise CDF
\(L^1\) compares cumulative mass along each edge.  The bin count is derived
from a target bin length in metres, using the median retained road-edge
length to select an effective bins-per-edge value.

\paragraph{Reference Rows: Noise Floor and Uniform-Road Baseline.}
We report two non-trainable reference rows alongside the trained methods.
The \emph{eval.\ noise floor} is a test--test reference: we draw two
independent subsamples of size
\[
    N_{\mathrm{noise}} = 32{,}768
\]
from the held-out test pickups and evaluate the same graph metrics between
them under the identical \(N_{\mathrm{W}}=16{,}384\)-point Sinkhorn budget.
This estimates the irreducible discrepancy from finite test-set sampling
and lower-bounds the score any method can attain at the chosen evaluation
resolution.  The \emph{uniform-road} reference is the length-proportional
uniform measure on the real road edges; it represents an uninformed
baseline that ignores demand structure.  Both reference rows use the same
adaptive evaluation regularisation as the trained methods.

\clearpage
\newpage

\section{Additional Experimental Results}
\label{sec-additional-results}

\subsection{Additional Results for Metric Graph Geometry Benchmarks}
\begin{table}[h]
\centering
\small
\caption{Full absolute accuracy results. Each entry reports mean $\pm$ SEM over five random seeds. Lower is better. Best result per benchmark/metric is in \textbf{bold}; second-best is \underline{underlined}.}
\label{tab:q1-full}
\setlength{\tabcolsep}{4pt}
\begin{tabular}{llcccc}
\toprule
Benchmark & Method & Graph $W_1$ & Graph $W_2$ & Density $L_1$ & Edgewise CDF $L_1$ \\
\midrule
\multirow{5}{*}{Theta} & Ambient pushforward & 0.0466 {\scriptsize $\pm$ 0.0004} & 0.0625 {\scriptsize $\pm$ 0.0005} & 0.8322 {\scriptsize $\pm$ 0.0062} & 0.0466 {\scriptsize $\pm$ 0.0004} \\
 & Node interpolation & 0.0465 {\scriptsize $\pm$ 0.0004} & 0.0624 {\scriptsize $\pm$ 0.0004} & 0.8365 {\scriptsize $\pm$ 0.0088} & 0.0465 {\scriptsize $\pm$ 0.0004} \\
 & Neural Eucl. (Log)& \underline{0.0037 {\scriptsize $\pm$ 0.0003}} & \underline{0.0231 {\scriptsize $\pm$ 0.0001}} & \underline{0.3254 {\scriptsize $\pm$ 0.0056}} & \underline{0.0053 {\scriptsize $\pm$ 0.0005}} \\
 & Neural Eucl. (Gromov) & \textbf{0.0037 {\scriptsize $\pm$ 0.0003}} & \textbf{0.0231 {\scriptsize $\pm$ 0.0001}} & 0.3256 {\scriptsize $\pm$ 0.0055} & \textbf{0.0053 {\scriptsize $\pm$ 0.0004}} \\
 & Neural Tropical & 0.0043 {\scriptsize $\pm$ 0.0002} & 0.0234 {\scriptsize $\pm$ 0.0003} & \textbf{0.3175 {\scriptsize $\pm$ 0.0079}} & 0.0058 {\scriptsize $\pm$ 0.0008} \\
\midrule
\multirow{5}{*}{Grid} & Ambient pushforward & 0.0491 {\scriptsize $\pm$ 0.0007} & 0.0660 {\scriptsize $\pm$ 0.0010} & 0.5655 {\scriptsize $\pm$ 0.0102} & 0.0491 {\scriptsize $\pm$ 0.0007} \\
 & Node interpolation & 0.0491 {\scriptsize $\pm$ 0.0003} & 0.0659 {\scriptsize $\pm$ 0.0006} & 0.5590 {\scriptsize $\pm$ 0.0074} & 0.0491 {\scriptsize $\pm$ 0.0003} \\
 & Neural Eucl. (Log) & \underline{0.0045 {\scriptsize $\pm$ 0.0002}} & \underline{0.0241 {\scriptsize $\pm$ 0.0004}} & \textbf{0.3125 {\scriptsize $\pm$ 0.0149}} & \underline{0.0074 {\scriptsize $\pm$ 0.0012}} \\
 & Neural Eucl. (Gromov) & \textbf{0.0044 {\scriptsize $\pm$ 0.0002}} & \textbf{0.0237 {\scriptsize $\pm$ 0.0003}} & \underline{0.3136 {\scriptsize $\pm$ 0.0124}} & \textbf{0.0063 {\scriptsize $\pm$ 0.0008}} \\
 & Neural Tropical & 0.0048 {\scriptsize $\pm$ 0.0002} & 0.0246 {\scriptsize $\pm$ 0.0005} & 0.3454 {\scriptsize $\pm$ 0.0134} & 0.0089 {\scriptsize $\pm$ 0.0012} \\
\midrule
\multirow{5}{*}{Wheel} & Ambient pushforward & 0.0152 {\scriptsize $\pm$ 0.0013} & 0.0354 {\scriptsize $\pm$ 0.0010} & 0.3447 {\scriptsize $\pm$ 0.0077} & 0.0214 {\scriptsize $\pm$ 0.0007} \\
 & Node interpolation & 0.0389 {\scriptsize $\pm$ 0.0007} & 0.0543 {\scriptsize $\pm$ 0.0009} & 0.4583 {\scriptsize $\pm$ 0.0139} & 0.0389 {\scriptsize $\pm$ 0.0007} \\
 & Neural Eucl. (Log) & \textbf{0.0050 {\scriptsize $\pm$ 0.0003}} & \textbf{0.0259 {\scriptsize $\pm$ 0.0013}} & \textbf{0.3301 {\scriptsize $\pm$ 0.0139}} & \underline{0.0116 {\scriptsize $\pm$ 0.0025}} \\
 & Neural Eucl.  (Gromov) & \underline{0.0051 {\scriptsize $\pm$ 0.0003}} & 0.0263 {\scriptsize $\pm$ 0.0016} & \underline{0.3312 {\scriptsize $\pm$ 0.0131}} & 0.0120 {\scriptsize $\pm$ 0.0029} \\
 & Neural Tropical & 0.0055 {\scriptsize $\pm$ 0.0003} & \underline{0.0261 {\scriptsize $\pm$ 0.0015}} & 0.3577 {\scriptsize $\pm$ 0.0136} & \textbf{0.0114 {\scriptsize $\pm$ 0.0024}} \\
\midrule
\multirow{5}{*}{Road} & Ambient pushforward & 0.2232 {\scriptsize $\pm$ 0.0024} & 0.2743 {\scriptsize $\pm$ 0.0031} & 0.7757 {\scriptsize $\pm$ 0.0042} & 0.2241 {\scriptsize $\pm$ 0.0020} \\
 & Node interpolation & 0.1808 {\scriptsize $\pm$ 0.0049} & 0.2434 {\scriptsize $\pm$ 0.0027} & 0.6838 {\scriptsize $\pm$ 0.0122} & 0.1995 {\scriptsize $\pm$ 0.0022} \\
 & Neural Eucl. (Log) & \underline{0.0110 {\scriptsize $\pm$ 0.0005}} & \textbf{0.0371 {\scriptsize $\pm$ 0.0023}} & \textbf{0.3938 {\scriptsize $\pm$ 0.0141}} & \textbf{0.0381 {\scriptsize $\pm$ 0.0051}} \\
 & Neural Eucl. (Gromov) & \textbf{0.0109 {\scriptsize $\pm$ 0.0006}} & \underline{0.0380 {\scriptsize $\pm$ 0.0023}} & \underline{0.3994 {\scriptsize $\pm$ 0.0122}} & \underline{0.0393 {\scriptsize $\pm$ 0.0051}} \\
 & Neural Tropical & 0.0138 {\scriptsize $\pm$ 0.0020} & 0.0439 {\scriptsize $\pm$ 0.0044} & 0.4132 {\scriptsize $\pm$ 0.0169} & 0.0417 {\scriptsize $\pm$ 0.0030} \\
\bottomrule
\end{tabular}
\end{table}

\subsection{Additional Results for Scalability Analysis}

\begin{figure}[htbp]
    \centering
    \includegraphics[width=0.9\linewidth]{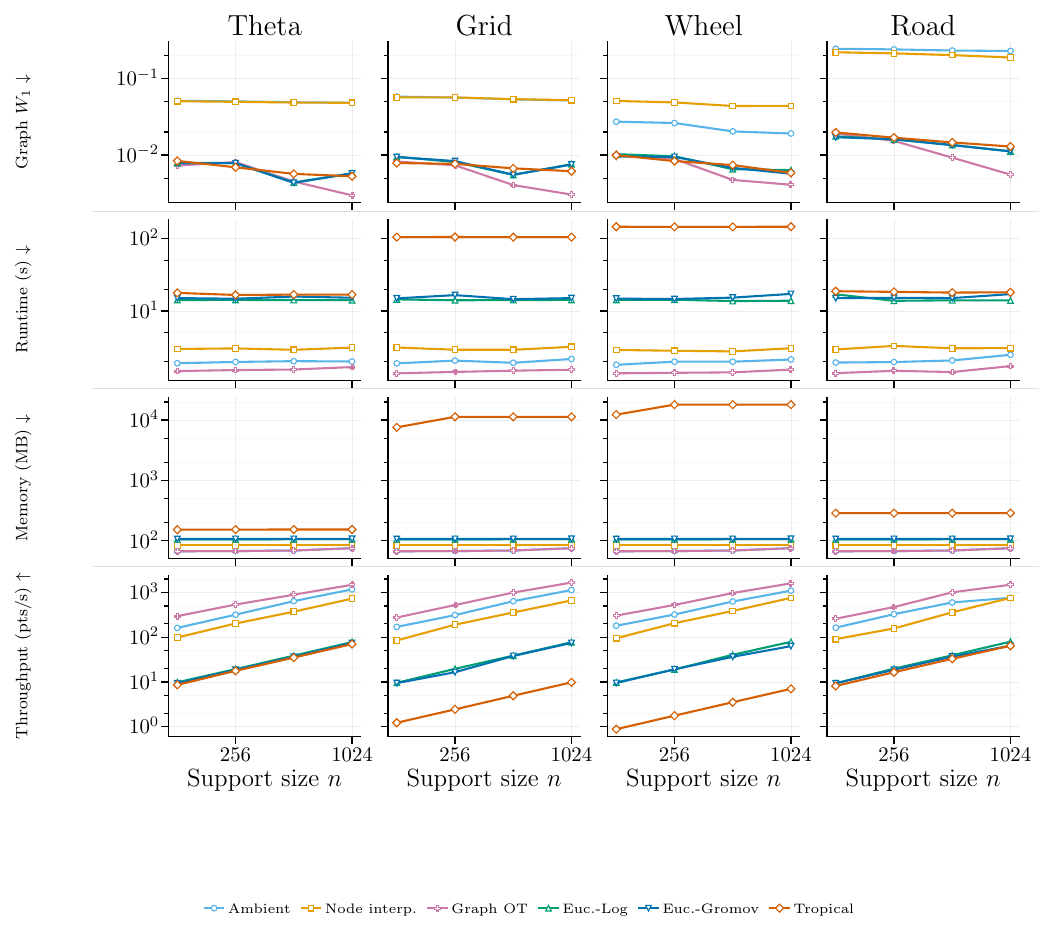}
    \caption{Rows: graph $W_1$ against the analytical target (lower is better),
  fit-time wall clock (lower is better), peak GPU memory (lower is
  better), and inference throughput $n_\mathrm{train}/\texttt{fit\_time}$
  (higher is better). Columns: the four Q2 graph families ordered by
  first Betti number, $g=2$ (Theta), $g=4$ (Grid), $g=5$ (Wheel), and
  $g\geq 7$ (Road, with bridge augmentation). Each panel shows one
  curve per method; lines are the mean across $3$ seeds.}
    \label{fig:q2_4x4}
\end{figure}

\subsection{Additional Results for Real-World Graph-Supported Generation on Manhattan Uber Pickups}

\begin{table}[h]
\centering
\caption{Generation time on the Manhattan road metric graph.}
\label{tab:q3-million-time}
\begin{tabular}{lc}
\toprule
Method & Time (s) $\downarrow$ \\
\midrule
Eval.\ noise floor &  21.9 {\scriptsize $\pm$ 1.4}  \\
\midrule
Ambient pushforward & 35.0 {\scriptsize $\pm$ 0.2} \\
Node interpolation & 136.8 {\scriptsize $\pm$ 1.7} \\
Neural Euclidean (Log) & 105.9 {\scriptsize $\pm$ 0.9} \\
Neural Euclidean (Gromov) & 112.4 {\scriptsize $\pm$ 2.2} \\
\bottomrule
\end{tabular}
\end{table}

\clearpage
\newpage

\section{Proofs}

\subsection{Proof of Theorem~\ref{thm:graph-supported-recovery}}
\label{app-proof-graph-supported-recovery}
\begin{proof}
Fix the embedding $\Psi:\Gamma\to\mathcal M_\Psi$. Since $\Gamma$ is compact
and $\mathcal M_\Psi$ is Hausdorff, the continuous injective map
$\Psi:\Gamma\to\mathcal M_\Psi$ is a homeomorphism onto its image
$\Psi(\Gamma)$. Hence
\[
    \Psi^{-1}:\Psi(\Gamma)\to\Gamma
\]
is well-defined and continuous.

By the assumptions that $\mathcal F$ is dense in
$C(\mathbb R^n,\mathbb R)$ under the ucc topology and that
$\varphi_\Psi:K_{\mathbb Q_\Psi}\to\mathbb R^n$ is continuous and injective,
the recovery result of \cite[Theorem 4.1]{micheli_entropic}, applied with
\[
(\mathcal M,\mu,\nu,c)
=
(\mathcal M_\Psi,\mathbb P_\Psi,\mathbb Q_\Psi,c_\Psi),
\]
gives a sequence
\[
g_m\in\mathsf C_{\mathbb Q_\Psi}(\varphi_\Psi^*\mathcal F)
\]
whose induced embedded Gibbs laws satisfy
\[
\|\pi_{m,\Psi}^{\varepsilon}
-
\pi_{\varepsilon,\Psi}^{\star}\|_{\mathrm{TV}}
\longrightarrow 0 .
\]

Let $\Pi_{m,\Psi,t}^{\varepsilon}$ denote the embedded heat-smoothed joint law
before projection:
\[
\Pi_{m,\Psi,t}^{\varepsilon}(du,dz)
=
\mathbb P_\Psi(du)\,
Q_{m,\Psi,t}^{\varepsilon}(u,dz),
\]
where $Q_{m,\Psi,t}^{\varepsilon}$ is obtained by heat-smoothing the embedded
conditional induced by $g_m$. Let $\Pi_{\varepsilon,\Psi,t}^{\star}$ denote the
corresponding heat-smoothed version of the embedded entropic optimizer
$\pi_{\varepsilon,\Psi}^{\star}$. Since heat smoothing is a Markov kernel acting
on the second coordinate, it is total-variation contractive. Hence, for every
$t>0$,
\[
\|\Pi_{m,\Psi,t}^{\varepsilon}
-
\Pi_{\varepsilon,\Psi,t}^{\star}\|_{\mathrm{TV}}
\le
\|\pi_{m,\Psi}^{\varepsilon}
-
\pi_{\varepsilon,\Psi}^{\star}\|_{\mathrm{TV}}
\longrightarrow 0 .
\]
For the two ambient geometries considered in this paper, namely Euclidean space
and the flat torus, the heat kernel is an approximate identity. Therefore
\[
\Pi_{\varepsilon,\Psi,t}^{\star}
\rightharpoonup
\pi_{\varepsilon,\Psi}^{\star}
\qquad
(t\downarrow0).
\]
Consequently, for any sequence $t_m\downarrow0$,
\begin{equation}
\label{eq:embedded-heat-recovery-proof}
\Pi_{m,\Psi,t_m}^{\varepsilon}
\rightharpoonup
\pi_{\varepsilon,\Psi}^{\star}.
\end{equation}

It remains to pass this convergence through the projection--pullback map. Recall
that
\[
\rho_\Psi^\Gamma
=
\Psi^{-1}\circ \operatorname{proj}_{\Psi(\Gamma)}
:
\mathcal M_\Psi\to\Gamma,
\]
where $\operatorname{proj}_{\Psi(\Gamma)}$ is a fixed measurable nearest-point
projection onto $\Psi(\Gamma)$. Define
\[
G_\Psi:\mathcal M_\Psi\times\mathcal M_\Psi\to\Gamma\times\Gamma,
\qquad
G_\Psi(u,z)
=
\bigl(\rho_\Psi^\Gamma(u),\rho_\Psi^\Gamma(z)\bigr).
\]
The graph-supported law generated by the algorithm is
\[
\pi_{m,\Gamma,t_m}^{\varepsilon,\Psi}
=
(G_\Psi)_\#\Pi_{m,\Psi,t_m}^{\varepsilon}.
\]
Indeed, disintegrating
\[
\Pi_{m,\Psi,t_m}^{\varepsilon}(du,dz)
=
\mathbb P_\Psi(du)Q_{m,\Psi,t_m}^{\varepsilon}(u,dz)
\]
and using $\rho_\Psi^\Gamma(\Psi(x))=x$ gives
\[
\pi_{m,\Gamma,t_m}^{\varepsilon,\Psi}(dx,dy)
=
\mathbb P_\Gamma(dx)\,
(\rho_\Psi^\Gamma)_\#
Q_{m,\Psi,t_m}^{\varepsilon}(\Psi(x),\cdot)(dy),
\]
which is the graph-supported law defined by the projection--pullback
procedure.

We next verify that the continuous mapping theorem applies to the measurable
map $G_\Psi$. Since
$\pi_{\varepsilon,\Psi}^{\star}\in
\Pi(\mathbb P_\Psi,\mathbb Q_\Psi)$ and both marginals are supported on
$\Psi(\Gamma)$,
\[
\pi_{\varepsilon,\Psi}^{\star}
\bigl(\Psi(\Gamma)\times\Psi(\Gamma)\bigr)=1.
\]
Thus it suffices to show that $G_\Psi$ is continuous at every point of
$\Psi(\Gamma)\times\Psi(\Gamma)$.

Let $v\in\Psi(\Gamma)$ and let $z_n\to v$ in $\mathcal M_\Psi$. Set
\[
s_n:=\operatorname{proj}_{\Psi(\Gamma)}(z_n).
\]
Since $v\in\Psi(\Gamma)$ is an admissible comparison point, nearest-point
optimality gives
\[
d_\Psi(s_n,z_n)\le d_\Psi(v,z_n).
\]
Hence
\[
d_\Psi(s_n,v)
\le
d_\Psi(s_n,z_n)+d_\Psi(z_n,v)
\le
2d_\Psi(z_n,v)
\to0.
\]
Therefore
\[
\operatorname{proj}_{\Psi(\Gamma)}(z_n)\to v.
\]
Since $\Psi^{-1}$ is continuous on $\Psi(\Gamma)$, it follows that
\[
\rho_\Psi^\Gamma(z_n)
=
\Psi^{-1}\bigl(\operatorname{proj}_{\Psi(\Gamma)}(z_n)\bigr)
\to
\Psi^{-1}(v).
\]
Thus $\rho_\Psi^\Gamma$ is continuous at every point of $\Psi(\Gamma)$.
Consequently, $G_\Psi$ is continuous at every point of
$\Psi(\Gamma)\times\Psi(\Gamma)$. If $D_{G_\Psi}$ denotes the discontinuity set
of $G_\Psi$, then
\[
\pi_{\varepsilon,\Psi}^{\star}(D_{G_\Psi})=0.
\]

By the mapping theorem for weak convergence applied to
\eqref{eq:embedded-heat-recovery-proof},
\[
(G_\Psi)_\#\Pi_{m,\Psi,t_m}^{\varepsilon}
\rightharpoonup
(G_\Psi)_\#\pi_{\varepsilon,\Psi}^{\star}.
\]
The left-hand side is
$\pi_{m,\Gamma,t_m}^{\varepsilon,\Psi}$. On the support of
$\pi_{\varepsilon,\Psi}^{\star}$, both coordinates lie in $\Psi(\Gamma)$, so
\[
G_\Psi(u,v)
=
(\Psi^{-1}(u),\Psi^{-1}(v)).
\]
Therefore
\[
(G_\Psi)_\#\pi_{\varepsilon,\Psi}^{\star}
=
(\Psi^{-1},\Psi^{-1})_\#
\pi_{\varepsilon,\Psi}^{\star}
=
\pi_{\varepsilon,\Gamma}^{\star,\Psi}.
\]
Thus
\[
\pi_{m,\Gamma,t_m}^{\varepsilon,\Psi}
\rightharpoonup
\pi_{\varepsilon,\Gamma}^{\star,\Psi}.
\]
Taking, for example, $m_k=k$ and any sequence $t_k\downarrow0$ gives
\[
\pi_{m_k,\Gamma,t_k}^{\varepsilon,\Psi}
\rightharpoonup
\pi_{\varepsilon,\Gamma}^{\star,\Psi}.
\]

Finally, since
$\pi_{\varepsilon,\Psi}^{\star}\in
\Pi(\mathbb P_\Psi,\mathbb Q_\Psi)$ with
$\mathbb P_\Psi=\Psi_\#\mathbb P_\Gamma$ and
$\mathbb Q_\Psi=\Psi_\#\mathbb Q_\Gamma$, the pullback
\[
\pi_{\varepsilon,\Gamma}^{\star,\Psi}
=
(\Psi^{-1},\Psi^{-1})_\#
\pi_{\varepsilon,\Psi}^{\star}
\]
has marginals $\mathbb P_\Gamma$ and $\mathbb Q_\Gamma$. Hence
\[
\pi_{\varepsilon,\Gamma}^{\star,\Psi}
\in
\Pi(\mathbb P_\Gamma,\mathbb Q_\Gamma).
\]
This proves the claim.
\end{proof}

\subsection{Proof of Lemma~\ref{lem:augmented-bridgeless}}
\label{app-proof-lemma-augmented-bridgeless}
\begin{proof}
Recall that an edge \(e\) is a bridge of a connected metric graph if and only if
\(\widetilde\Gamma \setminus e^\circ\) is disconnected. We show that this never happens in
\(\widetilde\Gamma\).

Let \(e\) be any edge of \(\widetilde\Gamma\). We distinguish three cases.

\smallskip
\noindent\emph{(i) \(e=\bar b\) is a virtual edge.}
By construction, \(\bar b\) connects the same endpoints \(u_b,v_b\) as the corresponding bridge
\(b\in\mathfrak B(\Gamma)\). Hence \(b\) and \(\bar b\) give two distinct \(u_b\)--\(v_b\) paths in
\(\widetilde\Gamma\), and therefore \(\bar b\) lies on the (simple) cycle \(b\cup \bar b\).
In particular, removing \(\bar b^\circ\) does not separate \(u_b\) from \(v_b\) (they remain connected
through \(b\)), so \(\widetilde\Gamma\setminus \bar b^\circ\) is connected.

\smallskip
\noindent\emph{(ii) \(e=b\) is an original bridge of \(\Gamma\).}
As in (i), the union \(b\cup \bar b\) is a cycle in \(\widetilde\Gamma\). Thus \(u_b\) and \(v_b\)
remain connected after removing \(b^\circ\) (via \(\bar b\)), and the graph cannot disconnect:
\(\widetilde\Gamma\setminus b^\circ\) is connected.

\smallskip
\noindent\emph{(iii) \(e\) is an original non-bridge edge of \(\Gamma\).}
Since \(e\) is not a bridge in \(\Gamma\), the space \(\Gamma\setminus e^\circ\) is connected.
Moreover, \(\Gamma\setminus e^\circ\) embeds naturally as a subspace of \(\widetilde\Gamma\setminus e^\circ\),
because the augmentation only adds extra edges and does not remove any existing ones.
A connected subspace cannot become disconnected by adding more edges/paths, hence
\(\widetilde\Gamma\setminus e^\circ\) is connected.

\smallskip
In all cases, \(\widetilde\Gamma\setminus e^\circ\) is connected, so \(\widetilde\Gamma\) has no bridges.
\end{proof}

\subsection{Proof of Lemma~\ref{lem:isometric-inclusion}}
\label{app-proof-lem:isometric-inclusion}
\begin{proof}
Because \(\Gamma\) is a subgraph of \(\widetilde\Gamma\), every path in \(\Gamma\) is also a path in
\(\widetilde\Gamma\). Taking infima over path lengths immediately gives
\[
d_{\widetilde\Gamma}(\iota(x),\iota(y)) \;\le\; d_\Gamma(x,y).
\]

We prove the reverse inequality by showing that any \(\widetilde\Gamma\)-path between \(\iota(x)\) and
\(\iota(y)\) can be modified into a \(\Gamma\)-path of no greater length.  Let \(\gamma\) be a
rectifiable path in \(\widetilde\Gamma\) from \(\iota(x)\) to \(\iota(y)\). Consider any connected
sub-arc of \(\gamma\) contained in a virtual edge \(\bar b\). Its endpoints are two points
\(p,q\in \bar b\) (possibly equal, possibly vertices, and not necessarily the endpoints of \(\bar b\)).
Using the fixed arc-length parametrizations, identify \(\bar b\) with \([0,\ell(\bar b)]\) and \(b\)
with \([0,\ell(b)]\) so that corresponding endpoints match. Since \(\ell(\bar b)\ge \ell(b)\), setting $\delta=\ell(\bar{b})-\ell(b)\ge 0$, the
map
\[
T_b:[0,\ell(\bar b)]\to[0,\ell(b)],\qquad T_b(t):=\min\{t,\ell(b)\},
\]
is \(1\)-Lipschitz and preserves the common endpoint at \(0\) (and collapses only the extra tail if
\(\delta>0\)). Pushing the segment of \(\gamma\) in \(\bar b\) forward by \(T_b\) yields a path in \(b\)
connecting the corresponding points in \(b\) whose length is \emph{no larger} than the original
segment in \(\bar b\). Replacing each maximal sub-arc of \(\gamma\) lying in any virtual edge in this
way produces a new path \(\gamma'\subset\Gamma\) from \(x\) to \(y\) with
\[
\mathrm{len}(\gamma') \;\le\; \mathrm{len}(\gamma).
\]

Since \(\gamma\) was arbitrary, taking the infimum over all \(\widetilde\Gamma\)-paths gives
\[
d_\Gamma(x,y)\;\le\; d_{\widetilde\Gamma}(\iota(x),\iota(y)).
\]
Together with the first inequality this proves
\(d_{\widetilde\Gamma}(\iota(x),\iota(y))=d_\Gamma(x,y)\), i.e.\ \(\iota\) is an isometric embedding.
\end{proof}

\subsection{Proof of Lemma~\ref{lem:torus-support-physical}}
\label{app-proof-lem:torus-support-physical}
\begin{proof}
By construction, \(\widetilde{\mathbb P}_\Gamma=\iota_\#\mathbb P_\Gamma\) and
\(\widetilde{\mathbb Q}_\Gamma=\iota_\#\mathbb Q_\Gamma\), hence
\(\mathrm{supp}(\widetilde{\mathbb P}_\Gamma)\subset \iota(\Gamma)\) and
\(\mathrm{supp}(\widetilde{\mathbb Q}_\Gamma)\subset \iota(\Gamma)\).
Equivalently, for every Borel set \(A\subset\widetilde\Gamma\) disjoint from \(\iota(\Gamma)\) we have
\(\widetilde{\mathbb P}_\Gamma(A)=\widetilde{\mathbb Q}_\Gamma(A)=0\).

Let \(B\subset\mathbb T^{\tilde g}\) be a Borel set disjoint from \(\widetilde\Phi_p(\iota(\Gamma))\).
Then \((\widetilde\Phi_p)^{-1}(B)\cap \iota(\Gamma)=\varnothing\), so
\[
\widetilde{\mathbb P}_{\mathfrak{J}}(B)
=\widetilde{\mathbb P}_\Gamma\bigl((\widetilde\Phi_p)^{-1}(B)\bigr)=0,
\qquad
\widetilde{\mathbb Q}_{\mathfrak{J}}(B)
=\widetilde{\mathbb Q}_\Gamma\bigl((\widetilde\Phi_p)^{-1}(B)\bigr)=0.
\]
Thus both pushforward measures assign zero mass to every Borel set disjoint from
\(\widetilde\Phi_p(\Gamma)\), i.e.\ they are supported on \(\widetilde\Phi_p(\Gamma)\).
\end{proof}

\subsection{Proof of Lemma~\ref{lem:torus-plan-supported}}
\label{app-proof-lem:torus-plan-supported}

\begin{proof}
Set \(S:=\widetilde\Phi_p(\Gamma)\) and \(A:=\mathbb T^{\tilde g}\setminus S\). Let
\(\pi\in\Pi(\widetilde{\mathbb P}_{\mathfrak{J}},\widetilde{\mathbb Q}_{\mathfrak{J}})\). Since the
first marginal of \(\pi\) is \(\widetilde{\mathbb P}_{\mathfrak{J}}\), we have
\[
\pi(A\times \mathbb T^{\tilde g})
=\widetilde{\mathbb P}_{\mathfrak{J}}(A)=0,
\]
and similarly, because the second marginal is \(\widetilde{\mathbb Q}_{\mathfrak{J}}\),
\[
\pi(\mathbb T^{\tilde g}\times A)
=\widetilde{\mathbb Q}_{\mathfrak{J}}(A)=0.
\]
Therefore
\[
\pi\Bigl(\bigl(\mathbb T^{\tilde g}\times\mathbb T^{\tilde g}\bigr)\setminus(S\times S)\Bigr)
\le \pi(A\times \mathbb T^{\tilde g})+\pi(\mathbb T^{\tilde g}\times A)=0,
\]
which proves that \(\pi\) is concentrated on \(S\times S\).
\end{proof}

\subsection{Proof of Proposition~\ref{prop:OT-compat-bridge}}
\label{app-proof-prop:OT-compat-bridge}
\begin{proof}
Existence of $\widetilde\pi_\Gamma^\star$ satisfying \eqref{eq:pushforward-opt} is exactly the bridgeless result
applied to $\widetilde\Gamma$. The zero-virtual-mass property follows because both marginals are supported on
$\iota(\Gamma)$, so any coupling between them is supported on $\iota(\Gamma)\times\iota(\Gamma)$.
Finally, identifying $\iota(\Gamma)$ with $\Gamma$ gives the claimed coupling on the original graph.
\end{proof}

%\newpage
%\input{checklist.tex}

\end{document}